\documentclass[12pt]{article}

\usepackage{etex}
\usepackage{enumitem}
\setlist{parsep = -0em, itemsep = 0.25em}
\usepackage[utf8]{inputenc}
\usepackage[dvips,letterpaper,margin=1in]{geometry}
\usepackage{amsmath}
\usepackage{amssymb}
\usepackage{amsthm}
\usepackage{bm}
\usepackage{colonequals}
\usepackage{comment}
\usepackage{graphicx}
\usepackage{subcaption}
\usepackage{xcolor}
\usepackage{tikz} 
\usetikzlibrary{shadows} 
\usepackage{authblk}
\usepackage[utf8]{inputenc} 
\usepackage[T1]{fontenc}    
\usepackage{url}            
\usepackage{booktabs}       
\usepackage{amsfonts}       
\usepackage{nicefrac}       
\usepackage{microtype}      

\usepackage{amsmath,amsthm,amssymb}
\usepackage{algorithm,algpseudocode}
\usepackage[disable]{todonotes}

\usepackage[colorlinks = true,
            linkcolor = teal,
            urlcolor  = blue,
            citecolor = magenta,
            anchorcolor = blue]{hyperref}






\newtheorem{thm}{Theorem}[section]
\newtheorem{defn}[thm]{Definition}
\newtheorem{cor}[thm]{Corollary}
\newtheorem{lemma}[thm]{Lemma}
\newtheorem{prop}[thm]{Proposition}







\def\R{\mathbb{R}}
\def\S{\mathbb{S}}

\def\thetab{\boldsymbol{\theta}}

\def\xb{\boldsymbol{x}}

\newcommand\restr[2]{{
  \left.\kern-\nulldelimiterspace 
  #1 
  \vphantom{\big|} 
  \right|_{#2} 
  }}

\title{On Sparsity in Overparametrised Shallow ReLU Networks}

%

\author[a]{Jaume de Dios}
\author[b,c,d]{Joan Bruna\thanks{This work is partially supported by the Alfred P. Sloan Foundation, NSF RI-1816753, NSF CAREER CIF 1845360, and the Institute for Advanced Study.}}

\affil[a]{UCLA, Los Angeles, California}
\affil[b]{Courant Institute of Mathematical Sciences, New York
  University, New York}
\affil[c]{Center for Data Science, New York University}
\affil[d]{Institute for Advanced Study, Princeton}

\date{\today}

\begin{document}

\maketitle

\begin{abstract}
The analysis of neural network training beyond their linearization regime remains an outstanding open question, even in the simplest setup of a single hidden-layer. The limit of infinitely wide networks provides an appealing route forward through the mean-field perspective, but a key challenge is to bring learning guarantees back to the finite-neuron setting, where practical algorithms operate. 

Towards closing this gap, and focusing on shallow neural networks, in this work we study the ability of different regularisation 
strategies to capture solutions requiring only a finite amount of neurons, even on the infinitely wide regime. 
Specifically, we consider (i) a form of implicit regularisation obtained by injecting noise into training targets [Blanc et al.~19], and (ii) the variation-norm regularisation [Bach~17], compatible with the mean-field scaling. Under mild assumptions on the activation function (satisfied for instance with ReLUs), we establish that both schemes are minimised by functions having only a finite number of neurons, irrespective of the amount of overparametrisation. We study the consequences of such property and describe the settings where one form of regularisation is favorable over the other.


\end{abstract}
\section{Introduction}


Supervised Learning in high-dimensions remains an active research area, in part due to a confluence of several challenges, including approximation, computational and statistical, where the curse of dimensionality needs to be simultaneously avoided. 
Linear learning based on RKHS theory suffers in the high-dimensional regime due its lack of approximation power, which motivates the study of non-linear learning methods. Amongst them, arguably the simplest is given by single hidden-layer neural networks, trained with some form of gradient descent scheme. Despite its simplicity relative to its deep counterparts, learning with shallow neural networks is still not fully understood from a theoretical standpoint, specifically with regards to giving positive results that explain their advantage over linear learning methods on realistic settings. 

An important theme in recent analysis of neural networks is overparametrisation. By letting the number of neurons grow to infinity, one can obtain optimization, approximation and generalization guarantees beyond the linear learning regime \cite{chizat2020implicit, mei2018mean, rotskoff2018parameters}, at the expense of computational efficiency. In this context, a key goal is to exhibit the right combination of neural architecture and learning algorithm that preserves the guarantees of infinitely-wide neural networks with polynomially-sized networks. 
We note that our study is only tangentially related to the so-called \emph{double-descent} phenomena recently reported in those same models \cite{belkin_reconciling_2019,belkin_two_2019,mei2019generalization}, since we include regularisation from the onset.  
Regularisation can take two flavors: implicit (by injecting noise) or explicit (by adding a term in the empirical loss function). 
They are a key device to give statistical guarantees of generalisation, and in this work we study their potential computational benefit in the overparametrised regime. 

The mean-field regime of overparametrised shallow networks defines solutions in terms of measures over parameters. A natural form of explicit regularisation is then given by the variation-norm \cite{bach2017breaking}, which penalises an continous-equivalent of an $L^1$ norm over weights, and is equivalent to the path-norm \cite{neyshabur_norm-based_2015} and the Barron norm \cite{ma2019barron}, as well as the popular `weight-decay' for homogeneous activations such as the ones we consider here. 
Injecting noise into gradient descent dynamics is another classic form of implicit regularization, under appropriate ergodicity conditions that enable the forgetting of initial conditions. In this work, and inspired by \cite{blanc_implicit_2019}, we focus on a specific form of noise, namely noise added into the training labels. The authors interpreted the noisy gradient dynamics as an approximate Orstein-Uhlenbeck process that `walks' around the manifold of solutions with zero training error (which has positive dimension due to overparametrisation), and `locks' in solutions that minimise a certain implicit regulariser, given by the average squared norm of the network gradients over the data. They leveraged such implicit regularisation to explain the `simple' structure of solutions on specific cases, such as univariate inputs or a single datapoint. 

We encapsulate these two regularisation instances into a general family of regularisers that, combined with mild assumptions on the activation function, produce optimisation problems over the space of probability measures that can only be minimised if the measure has sparse support (Theorem \ref{thm:weight_implies_sparsity}). This gives a novel perspective on the overparametrised regime under these forms of regularisation, that connects it to a finite-dimensional linear program, and whereby the infinitude of parameters required to give optimization guarantees might be relaxed to a finite (albeit possibly exponentially large in dimension) quantity.   






\paragraph{Related Work:}Our work fits into a recent line of work that attempts to uncover the interplay between neural network overparametrisation, optimization and generalisation, dating back at least to \cite{bartlett1997valid}. The implicit bias brought by different flavors of gradient descent in models involving neural networks has since been thoroughly explored \cite{soudry2018implicit,gunasekar2017implicit,savarese2019infinite,neyshabur_norm-based_2015,li2017algorithmic,blanc_implicit_2019,chizat2020implicit,neal2018modern,advani2017high,bos1997dynamics}. Authors have also leveraged the special structure of ReLU activations \cite{maennel_gradient_2018, williams2019gradient}. Closest to our setup are \cite{blanc_implicit_2019,ergen2020convex}, who separately study the two forms of regularisation considered in this work. Our characterisation is consistent with theirs, but extends to a general high-dimensional setting and provides a unified picture.
Finally, recently \cite{bubeck2020network} studied sparse solutions of the TV regularised problem, establishing that in generic data one can find solutions with nearly optimal cost and sparsity; we instead focus on the exact minimisers of the constrained and penalised objectives, but leave the connections for future work.


\section{Preliminaries}







\subsection{Shallow Neural Networks and Variation-Norm Spaces}

We consider a shallow neural network with hidden-layer-width $m$ as the following function:
\begin{equation}
\label{eq:basic}
    f(\xb; \thetab_1\dots \thetab_m) := \frac{1}{m} \sum_{j=1}^m \phi(\thetab_j, \xb),
\end{equation}
where $\xb \in \Omega \subseteq \R^d$ is the feature of the input data, $\phi(\thetab, \xb) = c \sigma(a^\top \xb + b)$ is the \emph{neuron} constructed from an activation function $\sigma$, and $\thetab_j = (c_j, \theta_j) \in \R \times \R^{d+1}:= \Theta$ with $\theta=( a, b )$ are the parameters for the $j$-th neuron. 
As observed by several authors \cite{mei2018mean,chizat2018global, rotskoff2018parameters, sirignano2018dgm}, such a neural network can be equivalently represented in terms of a probability measure over $\Theta$ as $f_{\mu^{(m)}}$, where
\begin{equation}
    f_\mu(\xb) := \int_\Theta \phi(\thetab, \xb) \mu(d\thetab)~,
\end{equation}
and $\mu^{(m)}$ is the empirical measure of the parameters $\{ \thetab_j \}_{j=1}^m$: 
    $\mu^{(m)} = \frac{1}{m} \sum_{j=1}^m \delta_{\thetab_j}~.$ 

Let us denote by $\mathcal{P}(\Theta)$ the space of probability measures over $\Theta$. In the following, we write $\langle \chi, \mu \rangle := \int_\Theta \chi(\thetab) \mu(d\thetab)$ for a Borel-measurable test function $\chi$ \todo{Borel regularity ens val no? Després demanarem convexity pero so far-- de fet, utilitzem aquesta notació at all?} and $\phi_{\xb}:=\phi(\cdot, \xb)$.
Let $V: \Theta \to \R^+$ be a non-negative potential function satisfying $\inf_{\alpha, c} V(\alpha a, \alpha b, c) > 0$. 
 The set of functions representable as (\ref{eq:basic}) defines a metric space $\mathcal{F}_V$ with norm 
\begin{equation}
\label{eq:bli}
    \| f \|_{V} := \inf \left\{ \int_{\Theta}  V(\thetab) \mu(d\thetab);~ f(\xb) = \int_\Theta \phi(\thetab, \xb) \mu(d\thetab)\,. \right\}~.
\end{equation}
When $V(\thetab) = |c| \| \theta \| $, the resulting space is
 the so-called \emph{variation-norm space} $\mathcal{F}_1$  \cite{bach2017breaking}.
 One can verify \cite{ma2019barron} that  $V(\thetab) = |c|^q (\|\theta \| )^q$  gives the same functional characterisation for any $q \geq 1$. Moreover, if $\phi$ is 2-homogeneous w.r.t. $\thetab$ (such as the ReLU $\sigma(t)=\max(0,t)$) \footnote{$\phi(t\thetab, \xb) = t^2 \phi(\thetab, \xb)$}, then one can also verify that $V(\thetab) = \| \thetab\|^2$ also defines the same space \footnote{By first projecting the measure to the unit sphere $\| \theta\|=1$ and then lifting again as a point mass in the $c$-direction, and using the fact that $\mu$ has mass $1$; see \cite{ma2019barron}, Theorem 1.}. Such apparent flexibility is a consequence of the fact that $\mu$, the underlying object parametrising the integral representation (\ref{eq:bli}), is in fact a \emph{lifted} version of a more `fundamental' object $\nu=\mathrm{P}(\mu) \in \mathcal{M}(\mathbb{S}^d)$, the space of signed Radon measures over the unit $(d+1)$-dimensional sphere
 \todo{replace unit sphere by the projective space? the fundamental object here are the space of $d$-dimensional hyperplanes}\todo{Jaume: No: el signe importa per a les ReLu, a diferència de al projecitu.}. The measures $\mu$ and $\nu$ are related via the projection 
 \begin{equation}
     \int_{\mathbb{S}^d} \chi(\tilde{\theta}) d\nu(\tilde{\theta}) = \int_{\Theta} c \| \theta \| \chi\left(\frac{\theta}{\|\theta\|}\right) d\mu(\thetab)
 \end{equation}
 for all continuous test functions $\chi: \S^d \to \R$. One can verify  \cite{chizat2019sparse} that for the previous choice of potential $V$, one has
$\|f \|_V = \inf \{ \| \nu\|_{\mathrm{TV}};\, f(\xb) = \int_{\S^d} \sigma(\langle \tilde{\theta}, \tilde{\xb} \rangle) d\nu(\tilde{\theta}) \}$, where $\tilde{\xb}=(\xb,1)$ and $\| \nu \|_{\mathrm{TV}}$ is the \emph{Total-Variation} norm of $\nu$ \cite{bach2017breaking}.  

This variation-norm space contains any RKHS whose kernel is generated as an expectation over features  $k(\xb, \xb') = \int_\Theta \phi(\thetab,\xb) \phi(\thetab,\xb') \mu_0(d\thetab)$ with a base measure $\mu_0$, but it provides crucial advantages over these RKHS at approximating certain non-smooth, high-dimensional functions having some `hidden' low-dimensional structure \cite{bach2017breaking}. This motivates the study of overparametrised shallow networks with the scaling as in (\ref{eq:basic}), as opposed to the NTK scaling \cite{jacot2018neural} in $1/\sqrt{m}$. 

As we will see in Section \ref{sec:noise}, other $\mathcal{F}_V$ spaces arise naturally when training overparametrised neural networks with noisy dynamics, corresponding to different choice of potential function. In the following, we are going to focus on the ReLU case where $\sigma(t) = (t)_+ = \max(0,t)$. 

\subsection{Empirical Risk Minimization and Representer Theorem}

Given samples $\{ (\xb_i, y_i) \in \Omega \times \R \}_{i=1\dots n}$ for a regression task, learning in $\mathcal{F}_1$ in the \emph{interpolant} regime (corresponding to overparametrised models) amounts to the following problem
\begin{equation}
\label{eq:erm}
    \min_{ f \in \mathcal{F}_V} \| f\|_V ~~s.t. ~~f(\xb_i) = y_i ~\text{ for } i=1\dots n~.
\end{equation}
This problem can be equivalently expressed as an optimization problem in $\mathcal{P}(\Theta)$:
\begin{equation}
\label{eq:ermmeasure}
\min_{ \mu \in \mathcal{P}(\Theta)} \langle V, \mu \rangle ~s.t. ~ \langle \phi_{\xb_i}, \mu \rangle = y_i\,,i=1\dots n~.
\end{equation}
Even though $\mathcal{F}_1$ is not a RKHS, its convex structure also provides a Representer Theorem \cite{zuho1948, fisher1975spline, bach2017breaking, boyer2019representer}: the extreme points of the solution set of (\ref{eq:ermmeasure}) consist of point masses $\sum_{j=1}^r c_j \delta_{\thetab_j}$ of support $r$ at most $n$. In other words, there exist minimisers of the variation norm that require at most $n$ neurons to interpolate the data. However, in contrast with the RKHS representer theorem, these neurons are not explicit in terms of the datapoints, and cannot be directly characterised from (\ref{eq:ermmeasure}) since this problem generally lacks unicity.  

In practice, (\ref{eq:ermmeasure}) is typically solved in the penalised form, by introducing the explicit regularised empirical loss
\begin{equation}
\label{eq:penform}
    \mathcal{L}_\lambda(\mu) = \frac{1}{n} \sum_{i=1}^n \ell( \langle \phi_{\xb_i},\mu \rangle, y_i) + \lambda \langle V, \mu \rangle~,
\end{equation}
where $\ell$ is convex w.r.t. its first argument, e.g. $\ell(x,y) = \frac{1}{2}|x-y|^2$ and $\lambda$ controls the regularisation strength, so (\ref{eq:ermmeasure}) can be obtained as the limit $\lambda \to 0^+$ in (\ref{eq:penform}). 

\subsection{Training Overparametrised Neural Networks and Wasserstein Gradient Flows}

Notice that for empirical measures $\mu^{(m)}$ corresponding to a $m$-width shallow network, the loss $\mathcal{L}(\mu^{(m)})$ is precisely the loss
\begin{equation}
\label{eq:penform2}
{L}(\thetab_1, \dots, \thetab_m) =\frac{1}{n}\sum_i \ell(f(\xb_i;\thetab_1,\dots, \thetab_m), y_i) + \frac{\lambda}{m}\sum_{j=1}^m V(\thetab_j)~,
\end{equation}
with respect to the parameters $\{ \thetab_j \}_{j \leq m}$.    
In that case, the regularisation term corresponds to \emph{weight decay} for $V(\thetab)=\|\thetab\|^2$, and the so-called \emph{path norm} \cite{neyshabur_norm-based_2015} for $V(\thetab)=|c| \| \theta \|$. As also argued in \cite{neyshabur2014search}, 
these two regularisation terms capture the same implicit bias in the functional space.

Performing gradient descent on $L$ with respect to $\{\thetab_j\}_j$ induces  gradient dynamics on the functional $\mathcal{L}$ over $\mathcal{P}(\Theta)$,  with respect to the Wasserstein metric \cite{mei2018mean,chizat2018global, rotskoff2018parameters, sirignano2018dgm},
thus mapping an Euclidean non-convex problem (\ref{eq:penform2}) to a non-Euclidean convex one (\ref{eq:penform}).
The gradient dynamics of overparametrised neural networks in the infinite-width limit can thus be analysed as the mean-field limit of such Wasserstein Gradient flows, leading to global convergence for problems such as (\ref{eq:penform}) under appropriate homogeneity assumptions for $\phi$ \cite{chizat2018global, chizat2019sparse}. However, such convergence results are qualitative and only hold in the limit of $m \to \infty$ without extra structure in the minimisers of $\mathcal{L}(\mu)$ such as being point masses -- precisely the structure that our main results from Section \ref{sec:tv} provide. 

\subsection{Implicit Regularization}

A popular alternative to the explicit variation-norm regularisation is to rely on the particular choice of optimization scheme to provide an implicit regularisation effect. While gradient descent enjoys powerful implicit regularisation for logistic regression \cite{soudry2018implicit, chizat2020implicit} by favoring max-margin solutions, and for linear least-squares regression by choosing solutions of minimal $\ell_2$ norm, the situation for non-linear least-squares is less clear. 

Indeed, in those cases, the dynamics do not typically forget the initial conditions, making them very sensitive to initialisation and producing substantial qualitative changes, as illustrated by the lazy dynamics \cite{chizat2018note}. A powerful alternative is thus given by algorithms that  inject noise into the dynamics, such as SGD or Langevin Dynamics. 
The latter was studied in the framework of overparametrised shallow networks in \cite{mei2018mean}. 
This dynamics leads to an associated cost of the form:
 \begin{equation}
    \min_\mu \frac{1}{n} \sum_{i=1}^n \ell( \langle \phi_{\xb_i},\mu \rangle, y_i) - \lambda \mathcal{H}(\mu)~,
\end{equation}
where $\mathcal{H}$ is the relative entropy of $\mu$. Minimizers to this functional have a smooth density that is solution to the associated elliptic Euler-Lagrange problem.
They are therefore not sparse due to the entropic regularisation. 

An alternative to Langevin Dynamics and SGD was recently studied in \cite{blanc_implicit_2019}, consisting of adding noise into the targets $y_i$ rather than the parameters. As already hinted in \cite{blanc_implicit_2019}, the resulting dynamics capture an implicit bias corresponding to a $V$-space for a \emph{data-dependent} potential function $V$.


\section{Implicit Regularisation by Label Noise}
\label{sec:noise}

The goal of this section is to establish a link between a type of noise-induced regularisation phenomena established in \cite[Theorem 2]{blanc_implicit_2019} and the variation-norm spaces defined in section 3.1.

\subsection{Results in Blanc. et. al.}

Blanc et al. \cite{blanc_implicit_2019} propose Algorithm \ref{alg:noisy_sgd} for training a neural network on a data-set $(\xb_i, y_i)_{i=1}^{n}$. 
The algorithm is an SGD-type algorithm with a small, independent perturbation on each label $y_i$ sampled independently on each gradient step.

Denote by ${\mathcal{D}}=(\xb_1, \dots \xb_n)$ the input data and let ${\mathbb E}_{{\mathcal{D}}}$ denote the empirical average over the observed $n$ datapoints.  
Let $\vec \thetab=(\thetab_1, \dots, \thetab_m)$. 
The main result therein \cite[Theorem 2]{blanc_implicit_2019} states that, for large times (depending, amongst others, on the noise size and the number of parameters), the solution concentrates at local minimizers (amongst the zero-error solutions) of the \emph{effective regularization loss}:
\begin{equation}
\label{eq:blancminimizer}
	\Lambda(\vec \thetab, \mathcal{D}):= {\mathbb E}_{{\mathcal{D}}} \left [\sum_{j} |\nabla_{\thetab_j}f(\xb,\vec\thetab)|^2\right] \propto \frac 1 m \sum_{j} {\mathbb E}_{{\mathcal{D}}} \left [|\nabla_{\thetab_j}\phi(\thetab_j,\xb)|^2\right]~.
\end{equation}

The RHS of \eqref{eq:blancminimizer} is now an integral over the counting measure $\mu = \frac 1 m \sum_{j=1}^m \delta_{\thetab_j}$, and therefore, 
\begin{equation}
\label{eq:loss_4.1}
	\Lambda(\vec \thetab, \mathcal{D}) \propto \int {\mathbb E}_{{\mathcal{D}}} \left [|\nabla_{\thetab_i}\phi(\thetab_i,\xb)|^2\right] d\mu(\thetab) := \int \widetilde{V}_b(\thetab) d\mu(\thetab)~.
\end{equation}

	\begin{algorithm}[tb]
        \caption{SGD with noise on the labels}
       \label{alg:noisy_sgd}
    \begin{algorithmic}
       \State {\bfseries Parameters:} Noise size $\eta$,  step size $\epsilon$, noise distribution $\rho$
       \State {\bfseries Input:} Data $(\xb_i,y_i)_{i=1}^n$, Initial parameters $\theta_0$ 
       \Repeat
       \State {Choose a random data point $(\xb_{i_t}, y_{i_t})$}
       \State {Generate noise $r \sim \rho$}
       \State $\tilde y_{i_t} := y_{i_t} + \eta r$
       \State $	\thetab_{t+1} = \thetab_t - \epsilon \nabla_{\thetab} [(f(\xb_{i_t};\thetab)-\tilde y_{i_t})^2] $ 
       \Until{Convergence}
    \end{algorithmic}
    \end{algorithm}

\subsection{An equivalent formulation in the ReLU case}
\label{sec:equivalent_relu}
As we have seen in the previous section, the training algorithm with label noise minimizes $\langle \tilde{V}_b, \mu \rangle$. 
In the ReLU case we can explicitly write:
\begin{equation}
    \widetilde{V}_b(a,b,c) = {\mathbb E}_{{\mathcal{D}}} \left[\frac{|c^2|(a^\top \xb+b)^2_+}{c^2}+(1+|\xb|^2) c^2 \chi_{(a^\top \xb+b)\ge 0}\right ]~.
\end{equation}


The key realization is that in order to minimize the value of $\ref{eq:loss_4.1}$, the algorithm will \textit{exploit} the degeneracy $(a,b,c) \sim (\lambda a,\lambda b,\lambda^{-1} c)$ for $\lambda>0$. In other words, we may substitute $\widetilde{V}_b$ for 
$V_b(a,b,c) = \min_{\lambda > 0} \widetilde{V}_b  (\lambda a,\lambda b,\lambda^{-1} c)$ 
without effectively changing the problem. A straightforward computation shows that $V_b(a,b,c)$ is equal to:
\begin{equation}
	V_b(a,b,c) = |c|\sqrt{  {\mathbb E}_{\mathcal D} [ (a^\top \xb+b)_+^2] } := |c|w_{b}(a,b)~.
\end{equation}
There are two crucial properties of the potential function $V_b$. 
First, that it is invariant under the $\lambda$ degeneracy described above: If we have a measure $\mu$ and modify it with the method above to obtain $\tilde \mu$, both of them will have the same loss. The second is that the function $w$ is convex, and, as long as the neurons can `see' enough of the dataset, essentially convex. To make this precise, we start with two definitions:

\begin{defn}
    A neuron with parameters $(a,b)$ is \emph{at the bulk of the dataset} $\mathcal{D}=(\xb_1, \dots, \xb_n)$ whenever the map $(a,b) \mapsto ((a\cdot \xb_1+b)_+, (a\cdot \xb_2+b)_+, \dots (a\cdot \xb_n+b)_+ )$ is locally injective.
\end{defn}

\begin{defn}
	A weight $w(a,b)$ that is 1-homogeneous in $(a,b)$ is \emph{effectively strictly convex} if whenever
		\begin{equation}
			w(\lambda a+(1-\lambda) a', \lambda b+(1-\lambda) b') = \lambda w(a,b)+(1-\lambda)w(a',b')
		\end{equation}
		holds for some parameters $a,a',b,b'$ and $\lambda\in(0,1)$, the parameters $(a,b)$ and $(a',b')$ are proportional to each other.
\end{defn}

With these definitions in hand, we can state the main result of this section, a convexity result for the regularization loss (see the Appendix for a proof):

\begin{prop}
\label{lem:noise_convexity}
	For any set of variables $\{\xb_i\}_{i=1}^n$ the associated weight 
	$
	    w_b(a,b):= \sqrt{ \mathbb{E}_{\mathcal{D}}[ (a^\top\xb+b)_+^2] }
	$ is convex in $(a,b)$. Moreover, if the neuron with parameters $(a,b)$ is at the bulk of the dataset, the weight $w(a,b)$ is effectively strictly convex in a neighbourhood of $(a,b)$.
\end{prop}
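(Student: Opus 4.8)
The plan is to realise $w_b$ as a composition. Set $\tilde{\xb}_i:=(\xb_i,1)\in\R^{d+1}$, $\ell_i(a,b):=a^\top\xb_i+b$ (so $\nabla_{(a,b)}\ell_i=\tilde{\xb}_i$), and let $T:\R^{d+1}\to\R^n$ be the piecewise‑linear feature map $T(a,b)=\big((\ell_1(a,b))_+,\dots,(\ell_n(a,b))_+\big)$, $g:\R^n\to\R^+$ the $L^2(\mathcal{D})$ norm, so that $w_b=g\circ T$. Convexity then rests on two elementary facts: each coordinate of $T$ is a nonnegative convex function of $(a,b)$ (it is $t\mapsto t_+$ precomposed with an affine map), and $g$ is convex and, restricted to the nonnegative orthant $\R^n_{\ge 0}$, strictly increasing in each coordinate ($0\le u\le v$, $u\neq v\Rightarrow g(u)<g(v)$). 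For $x=\lambda x_1+(1-\lambda)x_2$ one chains $w_b(x)=g(T(x))\le g\big(\lambda T(x_1)+(1-\lambda)T(x_2)\big)\le\lambda g(T(x_1))+(1-\lambda)g(T(x_2))=\lambda w_b(x_1)+(1-\lambda)w_b(x_2)$, the first step using componentwise convexity of $T$ together with monotonicity of $g$ on $\R^n_{\ge 0}$, the second using convexity of $g$. This gives convexity of $w_b$ on all of $\R^{d+1}$.

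For effective strict convexity near a bulk neuron $\bar\theta=(\bar a,\bar b)$, I would analyse when this chain is an equality for some $x_1,x_2$ and $\lambda\in(0,1)$. Equality in the first inequality, via strict monotonicity of $g$ on $\R^n_{\ge 0}$, forces $T(x)=\lambda T(x_1)+(1-\lambda)T(x_2)$, i.e. each convex coordinate $(\ell_i)_+$ meets its chord at an interior point; by the standard fact that this propagates, $(\ell_i)_+$ is affine along the whole segment $[x_1,x_2]$, which (a kink of $t\mapsto t_+$ is never affine) forces $\ell_i$ to keep a constant sign on $[x_1,x_2]$. Equality in the second inequality, since $g$ is Euclidean‑type hence strictly convex off rays and $T(x_1),T(x_2)\in\R^n_{\ge 0}$, forces $T(x_1)=t\,T(x_2)$ for some $t\ge 0$ (with $t>0$ once we know $w_b>0$ on the neighbourhood, so that $T(x_1),T(x_2)\neq 0$). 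Letting $S^+=\{i:\ell_i\ge 0\text{ on }[x_1,x_2]\}$, the relation $T(x_1)=t\,T(x_2)$ is vacuous on coordinates outside $S^+$ (both sides vanish) and reads $\tilde{\xb}_i^\top(x_1-t x_2)=0$ on coordinates in $S^+$; hence $x_1-t x_2\perp\operatorname{span}\{\tilde{\xb}_i:i\in S^+\}$. Since $S^+\supseteq\{i:\ell_i(x_1)>0\}$, it now suffices to show that $\{\tilde{\xb}_i:\ell_i(x)>0\}$ spans $\R^{d+1}$ for every $x$ in a neighbourhood $U\ni\bar\theta$: then $x_1=t x_2$, proving $x_1,x_2$ proportional, which is exactly effective strict convexity on $U$.

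What remains is to turn the bulk hypothesis into this spanning property. When $\bar\theta$ lies in the interior of a top‑dimensional cell of the arrangement $\{\ell_i=0\}_{i\le n}$, this is immediate: there $T$ agrees locally with the linear map $(a,b)\mapsto(\tilde{\xb}_i^\top(a,b))_{i\in S}$ (padded with zeros), with $S=\{i:\ell_i(\bar\theta)>0\}$ locally constant, so local injectivity of $T$ is equivalent to injectivity of this linear map, i.e. to $\{\tilde{\xb}_i:i\in S\}$ spanning $\R^{d+1}$; and as crossing a hyperplane only enlarges the active set, the same holds on a neighbourhood $U$. (Equivalently, on such $U$ one has $w_b(a,b)=\sqrt{(a,b)^\top M(a,b)}$ with $M=\frac1n\sum_{i:\ell_i>0}\tilde{\xb}_i\tilde{\xb}_i^\top\succ 0$; differentiating and applying Cauchy–Schwarz in the $M$–inner product shows $\nabla^2 w_b$ is positive semidefinite with kernel exactly the radial direction $(a,b)$, which is another route to the same conclusion.) The main obstacle is the case of a bulk neuron lying on the non‑smooth locus $\bigcup_i\{\ell_i=0\}$: there the strictly active vectors need not span, because the $(\cdot)_+$ nonlinearity can make $T$ injective by resolving both sides of a hyperplane, so one cannot simply linearise. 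I would treat this case separately, arguing within the affine hull of the face of the arrangement containing $\bar\theta$ (where the restricted map is again affine and local injectivity forces the relevant vectors to span that face), or by perturbing $\bar\theta$ into a generic cell and passing to the limit using the global convexity and continuity of $w_b$ already established.
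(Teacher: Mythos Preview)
Your convexity argument is exactly the paper's: convexity of each ReLU coordinate, then Minkowski for the $\ell^2$ norm. No issue there.

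For effective strict convexity you take a genuinely different route from the paper, and the detour is what creates the boundary ``obstacle'' you flag at the end. The paper does not pass through a spanning condition at all. Instead it uses one structural fact you never invoke: the feature map $T$ is positively $1$-homogeneous, i.e.\ $T(\lambda a,\lambda b)=\lambda\,T(a,b)$ for $\lambda>0$. Once Minkowski's equality case gives $T(x_1)=t\,T(x_2)$ with $t>0$, homogeneity turns this into $T(x_1)=T(t x_2)$, and now local injectivity of $T$ (the bulk hypothesis) directly yields $x_1=t x_2$. Since $t=w_b(x_1)/w_b(x_2)$ and $w_b(\bar\theta)>0$ (otherwise $T$ would vanish on the whole ray through $\bar\theta$, contradicting local injectivity), $t$ is close to $1$ when $x_1,x_2$ are close to $\bar\theta$, so $t x_2$ stays in the neighbourhood where injectivity applies. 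No cell decomposition, no spanning, no special treatment of the non-smooth locus.

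Your spanning argument is correct in the interior of a cell and is an equivalent reformulation of local injectivity there, but on faces it forces you into the case analysis you sketch and do not complete. That case analysis can probably be made to work, but it is unnecessary: the homogeneity-plus-injectivity step handles all points uniformly. Note also that your first equality analysis (forcing $(\ell_i)_+$ to be affine on the segment) is not needed for the conclusion: if $x_1\not\propto x_2$ then $T(x_1)\not\propto T(x_2)$ by the argument above, so Minkowski is already strict and the chain is strict regardless of what happens in the first inequality.
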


We will show next how such convexity structure in the potential function can be leveraged to characterise minimisers of (\ref{eq:ermmeasure}) and (\ref{eq:penform}).  
\section{Minimizers to the convex-type TV have sparse support}
\label{sec:tv}

	In this section we will first focus on understanding the solutions of the regularised interpolation problem (\ref{eq:ermmeasure}), for potential functions $V$ of the form $V(\thetab) = |c| w(a,b)$ for a given regularization weight $w$. As discussed so far, this problem is motivated by the case $w = \sqrt{\|a\|^2+b^2}$, corresponding to variation-norm learning \cite{bach2017breaking} (and which, as seen in \cite{ongie_function_2019} is also related to a Tikhonov-type regularization). Another example is the implicit regularization scheme discussed in section \ref{sec:noise}. 

	By the rescaling trick (see the discussion in section  \ref{sec:equivalent_relu}) it suffices to consider the case where $w(a,b)$ is homogeneous of degree $1$ in the sense that for $\lambda > 0$ we have $w(\lambda a, \lambda b) = \lambda w (a,b)$. These weights can never be strictly convex (a desirable quantity when looking for uniqueness or sparsity of limits) because homogeneous functions are never strictly convex: $w(2a,2b) = \frac 1 2 (w(a,b)+ w(3a,3b))$. This obstruction, however, appears only from the \emph{parameter} perspective but not from the \emph{neuron} perspective, because proportional parameters also represent proportional neurons. This is a motivation for the \emph{effective convexity} definition in section \ref{sec:equivalent_relu}.

	 Lemma \ref{lem:noise_convexity} in section \ref{sec:equivalent_relu} shows that the implicit regularization term that appeared in \cite{blanc_implicit_2019} is also effectively strictly convex. The fact that the TV norm is effectively strictly convex is a direct consequence of Minkowski's inequality.
	 The definition of \emph{effective} strict convexity motivates an analog definition of sparsity on lifted measures:

	\begin{defn}
	\label{def:effsupp}
		A lifted measure $\mu(a,b,c)$ is effectively supported on $n$ points if the support of $\mu$ is contained on the union of at most $n$ half-planes of the form:
		\begin{equation}
			H_{a,b} = \{(\beta a, \beta b, c), c\in \mathbb R,\beta \in\mathbb R_{\ge 0}\}
		\end{equation}
	\end{defn}

	If a lifted measure is effectively supported on $n$ points, then (by construction) there is a lifted measure supported on exactly $n$ points that represents the same function. With this definition in hand we are ready to state the main result of this section (see Figure 1 for an illustration motivating the theorem):

	\begin{thm}
		\label{thm:weight_implies_sparsity}
		Let $w(a,b)$ be a 1-homogeneous effectively strictly convex regularization weight and $V(\thetab) = |c| w(a,b)$ the associated potential. Then the solutions of the problem (\ref{eq:ermmeasure}) are all effectively supported on a finite number of points. The number of points in the solution is bounded by $O(n)^{O(d)}$. 

	\end{thm}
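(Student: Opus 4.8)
The plan is to project out the degeneracies of $V$ so that (\ref{eq:ermmeasure}) becomes a total-variation minimisation over signed measures on a single convex hypersurface, then to use Lagrangian duality to control \emph{all} minimisers by one dual certificate, and finally to combine the effective strict convexity of $w$ with the piecewise-linear geometry of the ReLU features to show that the certificate's contact set is finite and of the stated cardinality. First, the rescaling $(a,b,c)\mapsto(\lambda a,\lambda b,\lambda^{-1}c)$, $\lambda>0$, leaves $\phi(\thetab,\cdot)$ unchanged (the ReLU is $1$-homogeneous), leaves $V(\thetab)=|c|\,w(a,b)$ unchanged ($w$ is $1$-homogeneous), and maps each half-plane $H_{a,b}$ to itself; hence, after discarding the neurons on which $w$ vanishes — these are inactive on all $\xb_i$ and can be removed without affecting $f_\mu$ or $\langle V,\mu\rangle$ — I may assume any competitor $\mu$ is supported on $C\times\R$ with $C:=\{(a,b):w(a,b)=1\}$. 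On $C\times\R$ one has $V(\thetab)=|c|$, and collapsing each fibre $\{u\}\times\R$ onto its signed mass yields $\nu\in\mathcal M(C)$ with $f_\mu(\xb)=\int_C(a^\top\xb+b)_+\,d\nu(a,b)$ and $\langle V,\mu\rangle\ge\|\nu\|_{\mathrm{TV}}$, equality holding exactly when $\mu$ carries a single sign of $c$ on each fibre. A short optimality argument then shows that $\mu$ minimises (\ref{eq:ermmeasure}) if and only if $\nu$ solves the projected total-variation problem $(\star)$, $\min\{\|\rho\|_{\mathrm{TV}}:\int_C(a^\top\xb_i+b)_+\,d\rho(a,b)=y_i,\ i=1,\dots,n\}$, and that the half-planes supporting $\mu$ are then exactly the atoms of $\nu$; so it suffices to bound $|\mathrm{supp}(\rho)|$ over minimisers $\rho$ of $(\star)$.

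Next, $(\star)$ is an infinite-dimensional linear program with $n$ equality constraints. Assuming — after deleting redundant data points — that $\{((a^\top\xb_i+b)_+)_{i\le n}:(a,b)\in C\}$ spans $\R^n$, the dual feasible set $\{\bm p\in\R^n:\sup_{(a,b)\in C}|g_{\bm p}(a,b)|\le1\}$, where $g_{\bm p}(a,b):=\sum_i p_i(a^\top\xb_i+b)_+$, is compact; strong duality holds and the dual $\max\{\langle\yb,\bm p\rangle:\sup_C|g_{\bm p}|\le1\}$ is attained at some $\bm p^\ast$ (see e.g.\ \cite{fisher1975spline, boyer2019representer}). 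Complementary slackness then forces every minimiser $\rho$ of $(\star)$ to be supported on the contact set $S:=\{u\in C:|g_{\bm p^\ast}(u)|=1\}$.

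Finally, the $n$ hyperplanes $\{(a,b):a^\top\xb_i+b=0\}$ form a central arrangement in $\R^{d+1}$ with $O(n^d)$ faces counted over all dimensions; on the closure $\bar F$ of any such face every $(a^\top\xb_i+b)_+$ is linear, so $g_{\bm p^\ast}$ agrees on $\bar F$ with a single linear functional $\ell_F$. Suppose $u_0,u_1\in C\cap\bar F$ with $\ell_F(u_0)=\ell_F(u_1)=1$: if they are proportional they coincide (a positive multiple of $u_0$ lying on $C$ must be $u_0$ by $1$-homogeneity, and a negative multiple would make $\ell_F$ negative there), and if they are not, then by effective strict convexity — which, together with $1$-homogeneity and positivity of $w$ off its recession cone, makes $\partial\{w\le1\}$ strictly convex — the midpoint $m\in\bar F$ has $w(m)<1$, so $m/w(m)\in C\cap\bar F$ and $g_{\bm p^\ast}(m/w(m))=\ell_F(m)/w(m)=1/w(m)>1$, contradicting dual feasibility. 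Hence $\{g_{\bm p^\ast}=1\}\cap C\cap\bar F$, and likewise $\{g_{\bm p^\ast}=-1\}\cap C\cap\bar F$, contains at most one point, whence $|S|\le 2\cdot O(n^d)=O(n)^{O(d)}$. Since every minimiser $\rho$ of $(\star)$ is supported on $S$, every minimiser $\mu$ of (\ref{eq:ermmeasure}) is effectively supported on $O(n)^{O(d)}$ points.

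The main obstacle is the behaviour of $w$ on its recession cone $\{w=0\}$ — exactly the `non-bulk' directions — where $C$ is unbounded and fails to be strictly convex, so both the normalisation in the first step and the contact-set bound in the last must be treated with care: one argues that minimisers do not exploit those directions, or one localises the strict-convexity argument to the bulk via Proposition \ref{lem:noise_convexity}. (For the genuine variation norm, $w=\sqrt{\|a\|^2+b^2}$, this degeneracy is absent and the argument is self-contained.) A secondary technical point is the attainment of the dual optimum, which relies on the non-degeneracy hypothesis that the ReLU features span $\R^n$.
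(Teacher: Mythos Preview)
Your argument is correct (modulo the caveats you honestly flag about the recession cone of $w$ and dual attainment), but it is a genuinely different route from the paper's. The paper works entirely on the \emph{primal} side: it uses the same hyperplane--arrangement cell decomposition you use, but then, on each cell, it replaces the restriction $\mu|_C$ of a putative minimiser by the Dirac mass at its barycentre (in the $(ca,cb)$-moments), observes via Lemma~\ref{lem:didi} that this leaves all predictions $\langle\phi_{\xb_i},\mu\rangle$ unchanged, and invokes Jensen together with effective strict convexity of $w$ (Lemma~\ref{lem:one_point_per_cell}) to conclude that the cost strictly drops unless $\mu|_C$ was already effectively a single atom. No duality, no certificate, no spanning hypothesis.

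Your approach instead reduces to a TV problem on the level set $C=\{w=1\}$, invokes LP strong duality to obtain a dual certificate $g_{\bm p^\ast}$, and then uses the strict convexity of $C$ (a geometric reformulation of effective strict convexity) to show that on each cell the contact set $\{|g_{\bm p^\ast}|=1\}\cap C$ is at most one point. This has the advantage of yielding, in one stroke, a \emph{common} finite set $S$ containing the support of \emph{every} minimiser---something the paper establishes separately as Theorem~\ref{thm:support_is_unique}---and it makes the link to the finite-dimensional linear program \eqref{eq:finitedim} explicit. The cost is the extra analytic overhead: you need dual attainment (clean when $C$ is compact, as in the genuine TV case, but delicate when $w$ has a nontrivial zero set, as you note) and the non-degeneracy assumption that the ReLU feature vectors span $\R^n$; the paper's barycentre/Jensen argument needs neither.
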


	The proof of the theorem has two steps. In the first step we will split the parameter space $(a,b)$ into cells where the parameters interact in a particularly nice way. This cell decomposition is reminiscent of the ``open sectors" described in \cite{maennel_gradient_2018}\footnote{The cells defined in this work are the closure of the open sectors defined by Maennel et. al.}. The second step is to use the effective strict convexity result to show that, on each of those cells, optimizing measures are supported on at most one point.

	\paragraph{Cell decomposition of the parameter space}

	Fix a dataset $\mathcal{D}=\{\xb_i\}_{i=1}^n$, with  $\xb_i \in \mathbb R^{d}$. This will split the parameter space of $\theta$ into a finite amount of convex cones $C_1, \dots C_S$ that we will call cells, such that two neurons belong to the same cell iff they are active on the same datapoints. We will consider closed cells, and therefore there will be overlap at the boundary of the cells. By interpreting each datapoint with is dual hyperplane, there is a one-to-one correspondence between these cells and the resulting hyperplane arrangement, thus $S \simeq n^d$ in generic datasets. 

	A key property of the cells is that the map $\theta=(a,b)\mapsto \sigma(\theta, \xb)=(a^\top  \xb +b)_+$ is linear when restricted to a single cell. From the convexity, the cone property, and the restricted linearity property, one can extract the following result:

	\begin{lemma}[Discrete version]
	\label{lem:didi}
	If two parameters $ (a,b),(a',b')$ belong to the same cell, then for all the datapoints $\xb_i$ it holds that
	\begin{equation}
	 \label{eq:effective_linearity}
	 ( a^\top \xb_i + b)_++((a')^\top \xb_i + b')_+ = ((a+a')^\top \xb_i+(b+b'))_+~.
	\end{equation}
	\end{lemma}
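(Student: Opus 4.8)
The plan is to read the identity \eqref{eq:effective_linearity} off directly from the defining property of the cells, so the argument is essentially a reformulation of the ``restricted linearity'' remark preceding the lemma. First I would recall the precise description of a cell: by construction two neurons lie in the same cell iff they are active on the same datapoints, so a cell $C$ is the set of parameters $\theta=(a,b)$ realizing one fixed sign pattern on $\mathcal D$, i.e. there is a subset $A\subseteq\{1,\dots,n\}$ with
\[
C=\bigl\{\,(a,b):\ a^\top\xb_i+b\ge 0 \text{ for } i\in A,\quad a^\top\xb_i+b\le 0 \text{ for } i\notin A\,\bigr\}.
\]
Since $C$ is an intersection of homogeneous half-spaces of $\R^{d+1}$, it is a closed convex cone; in particular it is closed under addition, so $(a,b),(a',b')\in C$ implies $(a+a',b+b')\in C$.

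Next I would argue pointwise in $i$. Fix a datapoint $\xb_i$ and write $t:=a^\top\xb_i+b$ and $t':=(a')^\top\xb_i+b'$; by linearity of $\theta\mapsto a^\top\xb_i+b$ we have $(a+a')^\top\xb_i+(b+b')=t+t'$. If $i\in A$, then $t\ge 0$ and $t'\ge 0$, and since $(a+a',b+b')\in C$ also $t+t'\ge 0$; hence $(t)_++(t')_+=t+t'=(t+t')_+$, which is exactly \eqref{eq:effective_linearity} for this $i$. If $i\notin A$, then $t\le 0$, $t'\le 0$ and $t+t'\le 0$, so $(t)_+=(t')_+=(t+t')_+=0$ and the identity holds trivially. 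As $i$ was arbitrary, the lemma follows.

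I do not expect a genuine obstacle: all the content sits in the definition of the cells as sign-pattern cones, and the only point that needs (minor) care is that the cells are taken \emph{closed}, so the boundary case $a^\top\xb_i+b=0$ is allowed — but $(\cdot)_+$ satisfies $t_+=t$ for $t\ge 0$, so the equality degrades gracefully on cell boundaries and no separate treatment is required. Equivalently, one can phrase the whole proof in one line: on $C$ the map $\theta\mapsto\sigma(\theta,\xb_i)$ is the restriction of a linear functional (of $\theta\mapsto a^\top\xb_i+b$ when $i\in A$, and of the zero functional when $i\notin A$), and a map that agrees with a linear functional on a cone closed under addition is automatically additive on that cone.
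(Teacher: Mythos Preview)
Your proof is correct and follows essentially the same approach as the paper: a direct case split on whether the datapoint $\xb_i$ is active or inactive for the cell, using that both parameters share the same sign on $a^\top\xb_i+b$ and $(a')^\top\xb_i+b'$. The only difference is cosmetic --- you make the cone/additivity structure explicit, whereas the paper simply computes the two cases directly.
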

	Using induction, plus standard limiting arguments this lemma can be upgraded to the measure-valued case:
	\addtocounter{thm}{-1}
	\begin{lemma}[Continuous version]
	If two measures $\mu(a,b,c),\mu'(a,b,c)$ are supported on the same cell $C$, then
	\begin{equation}
	    \langle \phi_{\xb_i}, \mu \rangle = \langle \phi_{\xb_i}, \mu' \rangle
	\end{equation}
	for all data points $\xb_i$ whenever the moment estimates hold:
	\begin{equation}
	\label{eq:moment_estimates}
		\begin{cases}
			\int_{C} c a\,d\mu(a,b,c) = \int_{C} c a\,d\mu'(a,b,c) & (\text{moment estimate for }a)\\
			\int_{C} cb\,d\mu(a,b,c) = \int_{C} c b\,d\mu'(a,b,c) & (\text{moment estimate for }b)\\
		\end{cases}
	\end{equation}
	\end{lemma}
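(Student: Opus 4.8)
The plan is to reduce the statement to the observation that, \emph{restricted to a single cell}, each feature $\phi_{\xb_i}$ is a linear function of the lifted parameters $(ca,cb)$. Fix a datapoint $\xb_i$ and view the cell $C$ as a closed convex cone in the $\theta=(a,b)$-space, so that $\mu,\mu'$ are supported on the lifted cone $C\times\R$ in $(a,b,c)$-space. By construction of the cell decomposition, every neuron with parameters in the interior of $C$ is active on one and the same subset $I\subseteq\{1,\dots,n\}$ of datapoints; hence $a^\top\xb_i+b$ has constant sign on the open cell, and by continuity the map $(a,b)\mapsto(a^\top\xb_i+b)_+$ agrees on all of $C$ (including its lower-dimensional boundary faces) with a fixed linear functional $L_i$: namely $L_i(a,b)=a^\top\xb_i+b$ when $i\in I$ and $L_i\equiv 0$ otherwise. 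This restricted linearity is exactly the content of the additivity identity \eqref{eq:effective_linearity} of Lemma \ref{lem:didi}, which moreover extends by a one-line induction (using that $C$ is convex, hence stable under sums) to $\sum_{j=1}^k(a_j^\top\xb_i+b_j)_+=\big((\sum_j a_j)^\top\xb_i+\sum_j b_j\big)_+$ for any $(a_1,b_1),\dots,(a_k,b_k)\in C$.

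I would first treat finitely supported measures $\mu=\sum_j\delta_{(a_j,b_j,c_j)}$ on $C\times\R$. Using the $1$-homogeneity of $t\mapsto t_+$ in $(a,b)$ and the cone property of $C$ (so that $(|c_j|a_j,|c_j|b_j)\in C$ irrespective of the sign of $c_j$), the extended additivity identity yields
\begin{equation*}
\langle\phi_{\xb_i},\mu\rangle=\sum_j c_j(a_j^\top\xb_i+b_j)_+=L_i\Big(\sum_j c_ja_j,\ \sum_j c_jb_j\Big)=L_i\Big(\int_C ca\,d\mu,\ \int_C cb\,d\mu\Big),
\end{equation*}
so $\langle\phi_{\xb_i},\mu\rangle$ depends on $\mu$ only through the two moments appearing in \eqref{eq:moment_estimates}. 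In fact this is immediate from restricted linearity even without induction: for any $\mu$ supported on $C\times\R$ whose moments are absolutely convergent, $\langle\phi_{\xb_i},\mu\rangle=\int_C cL_i(a,b)\,d\mu=\int_C L_i(ca,cb)\,d\mu=L_i\big(\int_C ca\,d\mu,\int_C cb\,d\mu\big)$, since a finite-dimensional linear functional commutes with integration.

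To justify this for arbitrary (not necessarily finitely supported) measures supported on $C\times\R$ one only needs integrability: absolute convergence of the moments in \eqref{eq:moment_estimates} gives $\int_C|c|\,\|(a,b)\|\,d\mu<\infty$, hence the pointwise bound $|c(a^\top\xb_i+b)_+|\le(\|\xb_i\|+1)\,|c|\,\|(a,b)\|$ makes $\langle\phi_{\xb_i},\mu\rangle$ well-defined and, truncating $C$ to balls and approximating weakly by finitely supported measures with converging moments, lets the identity of the previous step pass to the limit by dominated convergence. Consequently, if $\mu,\mu'$ are supported on $C\times\R$ and satisfy \eqref{eq:moment_estimates}, then for every $i$ we obtain $\langle\phi_{\xb_i},\mu\rangle=L_i\big(\int_C ca\,d\mu,\int_C cb\,d\mu\big)=L_i\big(\int_C ca\,d\mu',\int_C cb\,d\mu'\big)=\langle\phi_{\xb_i},\mu'\rangle$, which is the claim.

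The only real ingredient is the restricted-linearity/additivity property, and that is already delivered by Lemma \ref{lem:didi} together with its induction. The points that need a modicum of care are (i) that the identity holds on the \emph{closed} cell, i.e. also on the lower-dimensional faces where some $a^\top\xb_i+b$ vanishes (handled by continuity, since the two candidate functionals for $L_i$ then coincide), and (ii) the passage from finitely supported to general measures on the \emph{unbounded} cone $C\times\R$, which is where one must invoke absolute convergence of the moments and dominated convergence; neither introduces a genuine difficulty.
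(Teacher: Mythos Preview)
Your proposal is correct and follows essentially the same approach as the paper: both arguments hinge on the observation that on a fixed cell the map $(a,b)\mapsto (a^\top\xb_i+b)_+$ coincides with a fixed linear functional (either $a^\top\xb_i+b$ or $0$ depending on whether $\xb_i$ is active or inactive on the cell), so that $\langle\phi_{\xb_i},\mu\rangle=\xb_i\cdot\int ca\,d\mu+\int cb\,d\mu$ (or $0$) depends on $\mu$ only through the two moments in \eqref{eq:moment_estimates}. The paper's proof is a terse two-case version of exactly this; your write-up adds care about the closed-cell boundary and the passage from finitely supported to general measures, which the paper omits, but the underlying idea is identical.
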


	\paragraph{Effective convexity - measure case.}  \todo{He canviat aquest a paragraf x consistencia OK}
	Now we want to show that the effective convexity concentrates the mass on each cell. We will first study measures that are already concentrated on a single cell, and then patch the result.
	A first step is that effective convexity for $w(a,b)$ induces a similar notion effective convexity on $|c|w(a,b)$:

	\begin{lemma}
		[Lifting of the effective convexity]
		Let $w(a,b)$ be a 1-homogeneous effectively strictly convex regularization weight. Let $(a,b,c), (a',b',c')$, and $\lambda \in (0,1)$. Then
		\begin{equation}
			|\lambda c + (1-\lambda) c'|w(\lambda a+(1-\lambda) a', \lambda b+(1-\lambda) b') = \lambda |c|w(a,b)+(1-\lambda)|c'|w(a',b')
		\end{equation}
		if and only if there exist constants $\alpha,\beta \in \mathbb R^+$ such that $(a,b,c) = (\alpha a,\alpha b,\beta c)$. 
	\end{lemma}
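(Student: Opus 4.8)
The plan is to lift the effective strict convexity of the $1$-homogeneous weight $w$ to the $2$-homogeneous potential $V(\thetab)=|c|\,w(a,b)$ by decoupling the identity into its last coordinate — controlled by the ordinary triangle inequality for $|\cdot|$ — and its first $d+1$ coordinates — controlled by the convexity of $w$. The one preliminary manoeuvre that makes this work is to exploit the rescaling degeneracy $(a,b,c)\sim(\gamma a,\gamma b,\gamma^{-1}c)$, $\gamma>0$: it leaves both $V(\thetab)$ and the neuron $\phi(\thetab,\cdot)$ unchanged, so it may be used to normalise the two points in play so that $|c|=|c'|$. Since only objects that are invariant under this degeneracy are used downstream (the represented values $\langle\phi_{\xb_i},\mu\rangle$ and the cost $\langle V,\mu\rangle$), one is always free to pass to such representatives; in this gauge $V$ restricts to a genuinely convex function along the segment joining the two parameters, which is exactly what the rest of the argument needs. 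The general constants $\alpha,\beta$ in the statement are then recovered by undoing the degeneracy at the end.

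Working in the gauge $|c|=|c'|>0$ (setting aside the degenerate cases $c=0$, which contribute the zero neuron, and $w(a,b)=0$, where one first notes that effective strict convexity forces $\{w=0\}$ to lie in a single line through the origin, since $w(v)=w(v')=0$ makes $w(\tfrac{v+v'}{2})\le\tfrac12 w(v)+\tfrac12 w(v')$ an equality), I would argue as follows. Write $\bar a=\lambda a+(1-\lambda)a'$, etc. From $|\bar c|\le\lambda|c|+(1-\lambda)|c'|=|c|$ and, by convexity of $w$, $w(\bar a,\bar b)\le\lambda w(a,b)+(1-\lambda)w(a',b')$, multiplying the two non-negative inequalities gives
\[
|\bar c|\,w(\bar a,\bar b)\ \le\ |c|\big(\lambda w(a,b)+(1-\lambda)w(a',b')\big)\ =\ \lambda|c|\,w(a,b)+(1-\lambda)|c'|\,w(a',b'),
\]
which is precisely the right-hand side of the displayed identity. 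Hence the identity holds if and only if both of the above inequalities are equalities. Equality in the triangle inequality for the last coordinate forces $c$ and $c'$ to have the same sign, and with $|c|=|c'|$ this means $c=c'$; equality in the convexity inequality for $w$, combined with effective strict convexity, forces $(a,b)$ and $(a',b')$ to be proportional, with a strictly positive ratio $\alpha$ since both are non-zero and $w>0$ along the segment. Undoing the normalisation turns this into the asserted conclusion, namely that $(a',b',c')$ lies on the half-plane $H_{a,b}$, i.e.\ $(a',b',c')=(\alpha a,\alpha b,\beta c)$ with $\alpha,\beta>0$. For the converse one substitutes a point of the normalised shape $(\alpha a,\alpha b,c)$, $\alpha>0$, into both sides of the identity and checks agreement using the positive $1$-homogeneity of $w$.

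The main obstacle is that $V=|c|\,w(a,b)$ is \emph{not} jointly convex on $\Theta$ — along the $(c,w(a,b))$ variables it behaves like $|xy|$ — so the naive inequality $V(\lambda\thetab+(1-\lambda)\thetab')\le\lambda V(\thetab)+(1-\lambda)V(\thetab')$ is genuinely false without extra structure, and any direct attack on the identity in the raw parametrisation runs into this. The point of the normalisation step is exactly to confine the analysis to a slice ($|c|$ fixed) on which $V$ \emph{is} convex; once there, the statement is essentially Minkowski's inequality together with the already-established effective strict convexity of $w$, and no further difficulty arises.
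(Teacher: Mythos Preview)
Your approach is sound in substance and close in spirit to the paper's, but one point deserves care. The displayed identity in the lemma is \emph{not} invariant under the rescaling $(a,b,c)\mapsto(\gamma a,\gamma b,\gamma^{-1}c)$: the right-hand side is, but the left-hand side is not (the convex combination $\lambda\thetab+(1-\lambda)\thetab'$ does not commute with rescaling one endpoint). So the step ``undo the normalisation to recover the asserted conclusion'' does not literally work --- you have proved a gauge-fixed statement, not the one written. You clearly anticipate this when you say only the invariant quantities $\langle\phi_{\xb_i},\mu\rangle$ and $\langle V,\mu\rangle$ matter downstream, and that is exactly the right attitude: the paper itself footnotes that the main-text statement has a typo in how the interpolation in the lifted space is performed, and replaces it in the appendix by the operational form actually needed for Lemma~4.5 (given two parameters in the same cell not representing the same neuron, there is a single-point measure with the same moments and strictly smaller $\langle V,\cdot\rangle$).

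The paper's proof of that restated version is the ``dual'' of your normalisation trick: rather than rescaling to $|c|=|c'|$ and keeping $\lambda$ free, it keeps the original $(a,b,c),(a',b',c')$ and picks the specific weight $\lambda=|c|/(|c|+|c'|)$, which makes the moment estimates match for the centre-of-mass point $(\alpha,\beta,\gamma)=\big(\lambda(a,b)+(1-\lambda)(a',b'),\,c+c'\big)$ and reduces the strict inequality to the effective strict convexity of $w$ alone. Your route --- restrict to the slice $|c|=\text{const}$ where $V$ is genuinely convex, then read off both equality cases (triangle inequality for $c$, effective strict convexity for $w$) --- is a bit more conceptual and makes the obstruction ($V$ behaves like $|xy|$) very transparent; the paper's route is shorter and plugs directly into the moment-matching needed for Lemma~4.5. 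Both are correct for the intended application.
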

		In other words, the only scenario where there is no strict convexity is when testing on parameters that represent neurons proportional to each other, since the condition $(a,b,c) = (\alpha a,\alpha b,\beta c)$ is equivalent to saying that the two ReLU neurons share the same hyperplane. 
		
	Convex functions have a unique minimizer, and, in the same way we bootsrapped the discrete, single-variable information into its continuous counterpart in Lemma \ref{lem:didi}, we can see that:

	\begin{lemma}
	\label{lem:one_point_per_cell}
		Let $w(a,b)$ be a 1-homogeneous effectively strictly convex regularization weight, and fix a data cell.
		Amongst all the measures $\nu(a,b)$ which are supported on the fixed cell and have the same moment estimates (eq \eqref{eq:moment_estimates}), the minimizers of $L_{w}(\nu)$ are effectively supported on one point.
	\end{lemma}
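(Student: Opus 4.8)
The plan is to fix the target moment vector $m:=\bigl(\int_C c\,a\,d\mu,\ \int_C c\,b\,d\mu\bigr)\in\R^{d+1}$ prescribed by \eqref{eq:moment_estimates} and to show that the cost $L_w(\mu)=\int_C |c|\,w(a,b)\,d\mu$ is minimised, among measures supported on the cell $C$ with that moment, only by measures concentrated on a single half-plane. Since by the continuous version of Lemma~\ref{lem:didi} every competitor realises the same values $\langle\phi_{\xb_i},\cdot\rangle$ on the data, this is exactly the relevant constrained problem. First I would split $\mu=\mu^{+}+\mu^{0}+\mu^{-}$ according to $\mathrm{sign}(c)$; the part $\mu^{0}$ on $\{c=0\}$ contributes neither to the cost nor to the moment, so it can be discarded. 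On $\mu^{+}$, positive $1$-homogeneity of $w$ gives $|c|\,w(a,b)=w(ca,cb)$, so the cost of $\mu^{+}$ equals $\int w\,d(T_{\#}\mu^{+})$ where $T(a,b,c)=c(a,b)$ pushes $\mu^{+}$ forward to a measure on the convex cone $C$; symmetrically the cost of $\mu^{-}$ equals $\int w\,d(T'_{\#}\mu^{-})$ with $T'(a,b,c)=|c|(a,b)\in C$.

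Next I would apply Jensen's inequality for the convex function $w$ (convexity is part of Lemma~\ref{lem:noise_convexity}): writing $m^{+}:=\int_C c(a,b)\,d\mu^{+}\in C$ and $m^{-}:=\int_C |c|(a,b)\,d\mu^{-}\in C$, the cost of $\mu^{\pm}$ is at least $w(m^{\pm})$, and $m^{+}-m^{-}=m$. Replacing $\mu^{+}$ and $\mu^{-}$ by the single Diracs $\delta_{(m^{+},1)}$ and $\delta_{(m^{-},-1)}$ preserves both the moment and the realised function and does not increase the cost. For the equality case I would use that equality in Jensen for a convex function forces that function to be affine on the convex hull of the support of the underlying measure; by $1$-homogeneity together with the \emph{effective} strict convexity of $w$, affinity on a set spanning more than a line is impossible, so the support must lie on one ray through the origin. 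Hence a minimiser's positive and negative parts are each effectively supported on one point, and any minimiser is effectively supported on at most the two half-planes $H_{m^{+}}$ and $H_{m^{-}}$.

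It then remains to collapse these two. On the cell $C$ the set of active ReLUs is fixed, so there $w$ coincides with the genuine seminorm $\|\cdot\|:=\sqrt{Q(\cdot)}$ obtained by dropping the inactive coordinates, and Minkowski's inequality gives $w(m^{+})+w(m^{-})=\|m^{+}\|+\|m^{-}\|\ge\|m^{+}-m^{-}\|=w(m)$; because $(a,b)$ is at the bulk of the dataset this seminorm is effectively strictly convex, so equality forces $m^{+}$ and $-m^{-}$ to be non-negatively proportional, and since $m^{+}\in C$, $-m^{-}\in-C$ and the cell $C$ is a pointed cone this can only happen if one of $m^{+},m^{-}$ vanishes — i.e. the minimiser lives on a single half-plane, as claimed.

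I expect the main obstacle to be the measure-valued upgrade of these equality cases (the ``bootstrapping'' alluded to in the text): turning the finite-dimensional effective strict convexity into the statement that equality in the integral Jensen and Minkowski inequalities forces concentration on a ray, while controlling the flat directions of $w$ — for instance the completely inactive cell, on which $w\equiv0$ and sparsity genuinely fails — which is exactly where the bulk-of-the-dataset hypothesis enters in an essential way. A secondary point is that the final collapse to a single half-plane uses that the prescribed moment $m$ lies in $C\cup(-C)$, which is automatic whenever $m$ arises as the restriction to $C$ of a solution of the global problem that places parameters of a single sign on this cell; in general one only gets one half-plane per sign of $c$, which is still enough to propagate to the $O(n)^{O(d)}$ bound in Theorem~\ref{thm:weight_implies_sparsity}. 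Routine care is also needed to justify the push-forwards, the finiteness and lower semicontinuity of $w$ on $C$, and the normalisation of masses in the Jensen step.
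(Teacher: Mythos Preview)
Your core argument is essentially the paper's: Jensen's inequality for the convex $1$-homogeneous $w$ produces a single-Dirac competitor, and effective strict convexity forces every minimiser onto that ray. The paper's write-up is more compressed---it does not split by $\mathrm{sign}(c)$ but directly posits $\nu_0=\delta_{(\alpha,\beta,\pm1)}$ at the center of mass, shows by Jensen that it is a minimiser, and then pushes any other minimiser forward by $\phi(a,b,c)=\bigl(\tfrac12(\alpha+|c|a),\tfrac12(\beta+|c|b),\pm1\bigr)$ to obtain a strict improvement unless that minimiser was already effectively supported at $(\alpha,\beta)$.

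The one overreach is your third step. To collapse the two half-planes (one per sign of $c$) into one you invoke that on the cell $w$ coincides with a quadratic seminorm $\sqrt{Q(\cdot)}$ and appeal to the bulk-of-the-dataset hypothesis; neither is available here, since the lemma is stated for an \emph{arbitrary} $1$-homogeneous effectively strictly convex $w$, not only for the data-dependent $w_b$ of Section~\ref{sec:noise}, and carries no bulk assumption. The paper's treatment of the sign issue is instead the one-line remark in the proof of the preceding ``lifting'' lemma---opposite-sign mass in the same cell can be transferred and cancelled without increasing the regulariser---which is meant to apply to general $w$ and makes your Minkowski step unnecessary. As you already observe, even the weaker conclusion of one half-plane per sign of $c$ is enough for the $O(n)^{O(d)}$ count in Theorem~\ref{thm:weight_implies_sparsity}.
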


	\todo{He descommmentat aquesta linea, crec que si no queda molt sec, no?}Lemma \ref{lem:one_point_per_cell} is the key ingredient for the proof of Theorem \ref{thm:weight_implies_sparsity}:\begin{proof}[Proof of Theorem \ref{thm:weight_implies_sparsity}]
		We can write our measure $\mu = \sum_{i \in \text{Cells}} \mu_i$ where each $\mu_i$ is supported on the closure of a cell. By the previous results, on each cell the associated measure $\mu_i$ has to be supported at at most one point. 
	\end{proof}

\begin{figure}
    \centering
    \begin{minipage}[t]{.3\textwidth}
        \includegraphics[width=\textwidth]{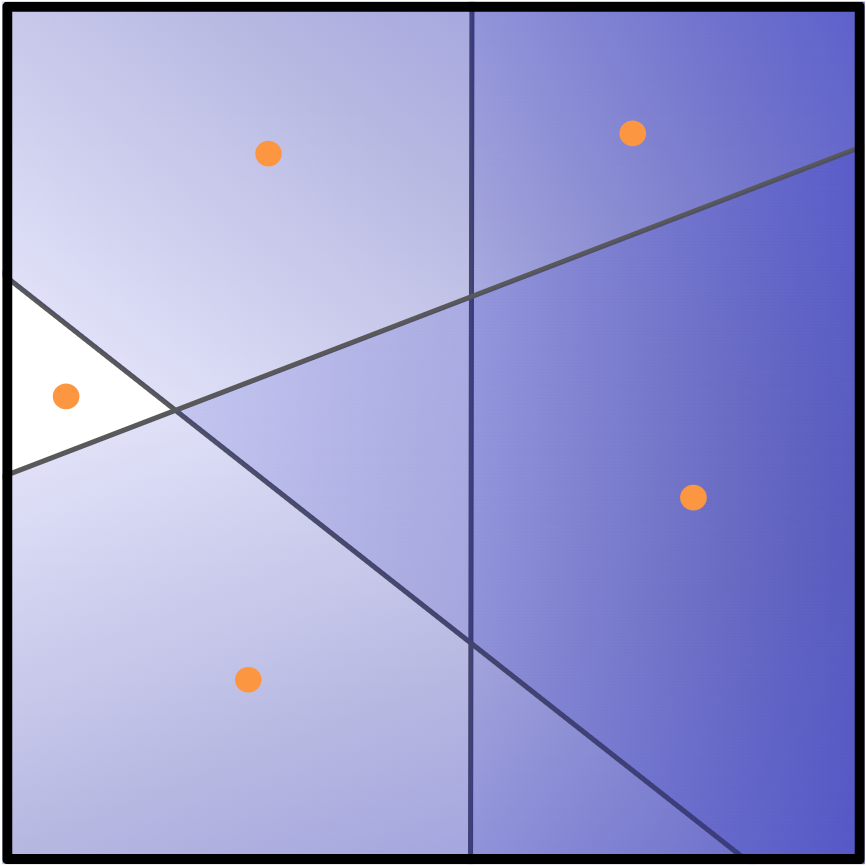}
        \footnotesize
        Figure 1.a: \textbf{Data-space perspective:} Each ReLu separates the data space into two half-spaces, the active half-space and the non-active. Given a subset $\mathcal{D'} \subseteq \mathcal D$ of datapoints (in orange)  there is at most one ReLu active at all points of $\mathcal D'$ but non-active in all points in $\mathcal D\setminus\mathcal D'$. 
    \end{minipage}
    \hspace{.02\textwidth}
    \begin{minipage}[t]{.3\textwidth}
        \includegraphics[width=\textwidth]{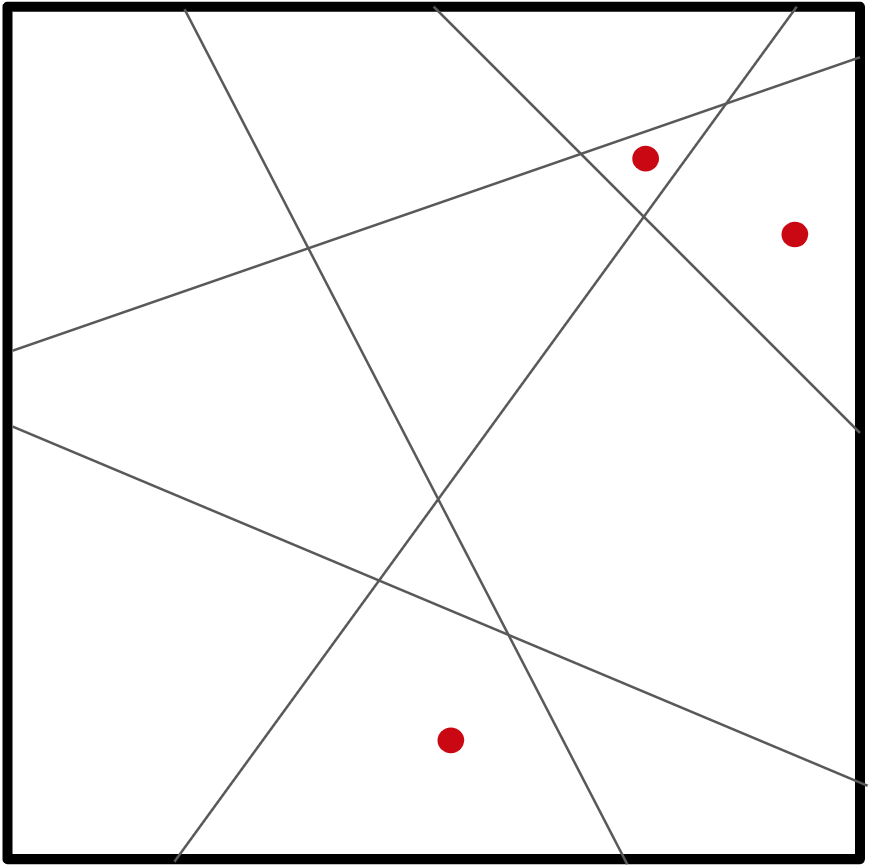}
        \footnotesize
        Figure 1.b: \textbf{Parameter-space perspective:} Each data-point induces a hyperplane on the parameter space: the set of ReLu parameters that have the datapoint at the hinge. There is at most one neuron (in red)  on each cell generated by these hyperplanes (See 1.c).
    \end{minipage}
    \hspace{.02\textwidth}
    \begin{minipage}[t]{.3\textwidth}
        \includegraphics[width=\textwidth]{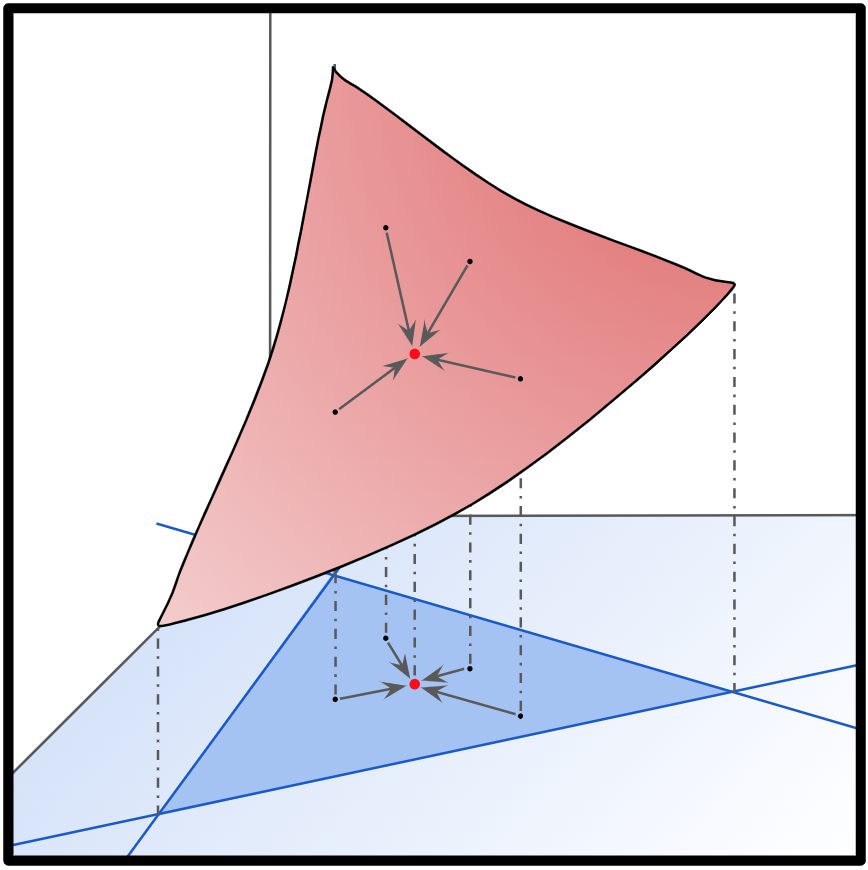}
        \footnotesize
        Figure 1.c: \textbf{Single-cell optimizers:} If a cell has more than one neuron on a cell, substituting all the neurons on the cell by a neuron in the center of mass will not affect the predictions on the data, but will give a gain on the regularizer. 
    \end{minipage}
    \vspace{-.1in}
\end{figure}

\paragraph{Recovering Discrete Minimisers by Projecting into the Sphere: }
Since the constrained optimization problem (\ref{eq:ermmeasure}) is defined over probability measures in the lifted space $\Theta$, there is a price to pay to recover discrete minimizers, what we called `effective' support in definition \ref{def:effsupp}. In the TV case, this aspect disappears as one looks at the equivalent problem in the space $\mathcal{M}(\S^d)$, as shown in the following corollary:
\begin{cor}
\label{cor:radonversion}
Let 
\begin{equation}
\label{eq:ermradon}
    \min_{\nu \in \mathcal{M}(\S^d)} \| \nu \|_{\mathrm{TV}} ~s.t.~\int_{\S^d} \sigma(\langle \tilde{\xb}_i, {\theta} \rangle) d\nu(\theta) = y_i ~\text{ for } i=1\dots n~.
\end{equation}
Then the solutions of (\ref{eq:ermradon}) are all supported on a finite number of points, of the same cardinality as in Theorem \ref{thm:support_is_unique}. 
\end{cor}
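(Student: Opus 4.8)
\textbf{Proof proposal for Corollary \ref{cor:radonversion}.}

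The plan is to transfer the sparsity statement of Theorem \ref{thm:weight_implies_sparsity} from the lifted problem (\ref{eq:ermmeasure}) with potential $V(\thetab)=|c|\,w(a,b)$ and $w(a,b)=\sqrt{\|a\|^2+b^2}$ to the Radon-measure problem (\ref{eq:ermradon}) on $\S^d$ via the projection $\nu = \mathrm{P}(\mu)$ introduced in Section 3.1. Recall from that discussion that for this choice of $V$ one has the identity $\|f\|_V = \inf\{\|\nu\|_{\mathrm{TV}} : f(\xb)=\int_{\S^d}\sigma(\langle\tilde\theta,\tilde\xb\rangle)\,d\nu(\tilde\theta)\}$, so the two optimization problems have the same optimal value, and any $\mu$ feasible for (\ref{eq:ermmeasure}) pushes forward to a $\nu$ feasible for (\ref{eq:ermradon}) with $\|\nu\|_{\mathrm{TV}}\le\langle V,\mu\rangle$, while conversely any feasible $\nu$ lifts back (place all the mass of $\nu$ restricted to $\{\tilde\theta\}$ as a point mass in the $c$-direction at that direction) to a feasible $\mu$ with $\langle V,\mu\rangle = \|\nu\|_{\mathrm{TV}}$. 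Hence optimal $\nu$'s are exactly the projections of optimal $\mu$'s.

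First I would take an arbitrary minimiser $\nu^\star$ of (\ref{eq:ermradon}) and lift it to a minimiser $\mu^\star$ of (\ref{eq:ermmeasure}) as above. By Theorem \ref{thm:weight_implies_sparsity}, $\mu^\star$ is effectively supported on a finite number $N=O(n)^{O(d)}$ of half-planes $H_{a_k,b_k}$. By Definition \ref{def:effsupp}, this means $\mu^\star$ may be replaced — without changing the represented function, hence keeping feasibility and optimality — by a measure genuinely supported on $N$ points $(a_k,b_k,c_k)$, i.e. $\mu^\star = \sum_{k=1}^N c_k'\,\delta_{(\bar a_k,\bar b_k,\cdot)}$-type atoms after normalising the hyperplane directions. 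Pushing this finitely supported optimal lifted measure forward through $\mathrm{P}$ yields a measure on $\S^d$ supported on at most $N$ points $\{(\bar a_k,\bar b_k)/\|(\bar a_k,\bar b_k)\|\}$, and by the value-matching above it is optimal for (\ref{eq:ermradon}). The only subtlety is that I want the statement for \emph{all} minimisers of (\ref{eq:ermradon}), not just one: here I would argue that if $\nu^\star$ is a minimiser, its canonical lift $\mu^\star$ (point mass in the $c$-direction over each direction in $\mathrm{supp}\,\nu^\star$) is itself a minimiser of (\ref{eq:ermmeasure}) — because its cost equals $\|\nu^\star\|_{\mathrm{TV}}$, which equals the optimal value — and then Theorem \ref{thm:weight_implies_sparsity} applies directly to $\mu^\star$, forcing it to be effectively supported on finitely many half-planes; since the lift is already a point mass in $c$ over each occupied direction, effective support on $N$ half-planes is literally support of $\nu^\star$ on $N$ directions.

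The one step I expect to require care is the second one just described: Theorem \ref{thm:weight_implies_sparsity} asserts that the minimisers are effectively supported on finitely many points, but I must make sure that when the lift $\mu^\star$ of a putative minimiser $\nu^\star$ is \emph{already} concentrated as a single atom in the $c$-coordinate over each hyperplane direction, "effectively supported on $N$ half-planes" does translate to "$\nu^\star$ supported on $N$ points". This is essentially immediate from $\mathrm{P}$ collapsing each half-plane $H_{a,b}$ to the single direction $(a,b)/\|(a,b)\|$, but it relies on the fact that within a half-plane the lift cannot be spread (it is a single atom by construction), so there is no loss of generality and no need to re-optimise within a half-plane. I would also note the cardinality bound is inherited verbatim, $N = O(n)^{O(d)}$, matching Theorem \ref{thm:weight_implies_sparsity} (the reference to \ref{thm:support_is_unique} in the corollary statement appears to be a typo for \ref{thm:weight_implies_sparsity}). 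Finally, I would remark that, unlike the lifted setting, here no "effective" qualifier is needed because $\S^d$ already identifies all proportional parameters, which is precisely the content of the corollary.
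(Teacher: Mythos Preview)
Your proposal is correct and follows exactly the route the paper intends: the corollary is stated without an explicit proof, the surrounding text indicating it follows from Theorem \ref{thm:weight_implies_sparsity} via the projection $\mathrm{P}$ onto $\mathcal{M}(\S^d)$, which is precisely your lift--apply--project argument. Your handling of the ``all minimisers'' issue (canonical lift of any optimal $\nu^\star$ is itself optimal in the lifted problem, hence Theorem \ref{thm:weight_implies_sparsity} applies directly) and your observation that on $\S^d$ the ``effective'' qualifier becomes vacuous are both on point, as is your remark that the cardinality reference should be to Theorem \ref{thm:weight_implies_sparsity}.
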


\paragraph{Unicity of Supports Across all minimisers:}
We have so far proven that solutions to the constrained minimization have a single mass point at each cell. However, a strictly stronger result holds:
\begin{thm}
\label{thm:support_is_unique}
Let $C$ be a cell. Let $\mu, \mu'$ be zero-errror minimizers of $L_{\text{reg}}$ with non-zero support on $C$. Then both $\mu$ and $\mu'$ are effectively supported in the same single point in $C$.
\end{thm}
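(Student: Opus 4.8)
The plan is to upgrade Lemma~\ref{lem:one_point_per_cell} from a statement about a single minimiser to a statement comparing two distinct minimisers, using strict convexity of $L_{\text{reg}}$ along segments in the space of measures. Fix a cell $C$ and two zero-error minimisers $\mu,\mu'$ of $L_{\text{reg}}$ whose restrictions $\mu_C,\mu'_C$ to $C$ are nonzero. The key observation is that the objective and the constraints are both \emph{linear in $\mu$}, so the midpoint $\bar\mu=\tfrac12(\mu+\mu')$ is again a zero-error point with $L_{\text{reg}}(\bar\mu)=\tfrac12(L_{\text{reg}}(\mu)+L_{\text{reg}}(\mu'))$, hence $\bar\mu$ is also a minimiser. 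By the continuous version of Lemma~\ref{lem:didi}, the prediction $\langle\phi_{\xb_i},\cdot\rangle$ restricted to measures on $C$ depends only on the two moments $\int_C c\,a\,d\mu$ and $\int_C cb\,d\mu$; since $\bar\mu$ is a minimiser, Lemma~\ref{lem:one_point_per_cell} forces $\bar\mu_C$ to be effectively supported on one half-plane $H_{a_0,b_0}$. The same argument applied to $\mu$ and $\mu'$ individually shows $\mu_C$ is effectively supported on some $H_{a_1,b_1}$ and $\mu'_C$ on some $H_{a_2,b_2}$. So what remains is to prove $H_{a_1,b_1}=H_{a_2,b_2}$.

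For that, I would argue by the equality case in the convexity of the regulariser. Write $\mu_C$ as a point mass $c_1\delta_{(a_1,b_1)}$ after projecting/lifting (using the homogeneity rescaling trick of Section~\ref{sec:equivalent_relu}, we may normalise so the single support half-plane is represented by one point), and similarly $\mu'_C = c_2\delta_{(a_2,b_2)}$, $\bar\mu_C = c_0\delta_{(a_0,b_0)}$. The regulariser evaluated on $\bar\mu_C$ is $|c_0|w(a_0,b_0)$, which by linearity of $L_{\text{reg}}$ equals $\tfrac12(|c_1|w(a_1,b_1)+|c_2|w(a_2,b_2))$; meanwhile the moment-matching constraint gives $c_0 a_0 = \tfrac12(c_1 a_1 + c_2 a_2)$ and $c_0 b_0 = \tfrac12(c_1 b_1 + c_2 b_2)$, i.e. the point $(c_0 a_0, c_0 b_0)$ is the midpoint of $(c_1 a_1,c_1 b_1)$ and $(c_2 a_2,c_2 b_2)$. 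Now I invoke the Lifting of the effective convexity lemma (the one right before Lemma~\ref{lem:one_point_per_cell}): the function $(a,b,c)\mapsto |c|w(a,b)$ is effectively strictly convex, so the only way the midpoint value can match the average of the endpoint values — after the reparametrisation identifying $(\beta a,\beta b,c)$ with $(a,b,\beta^{-1}c)$ — is if $(a_1,b_1,c_1)$ and $(a_2,b_2,c_2)$ represent \emph{proportional neurons}, i.e. $(a_1,b_1)$ and $(a_2,b_2)$ are proportional. That is exactly $H_{a_1,b_1}=H_{a_2,b_2}$, which is the claim.

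One technical wrinkle I would handle carefully: when $\mu_C$ is effectively supported on a half-plane $H_{a_1,b_1}$ but \emph{not} a single point (it could spread mass in the $c$-direction, including possibly cancelling positive and negative $c$), the reduction to a single $\delta$ requires collapsing the $c$-fiber. Since $w$ does not depend on $c$ and the regulariser contributes $w(a_1,b_1)\int|c|\,d\mu_C$ while the moments see $\int c\,d\mu_C$, for fixed moment $\int c\,d\mu_C = c_1$ the regulariser is minimised precisely when all mass has the same sign of $c$; because $\mu$ is a minimiser this must already hold, so $\mu_C$ is genuinely representable as $|c_1|$ worth of mass at a single point on $H_{a_1,b_1}$ and the argument above goes through.

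**The main obstacle** I anticipate is not conceptual but bookkeeping: making the equality-case argument airtight through the homogeneity reparametrisation. The Lifting lemma gives strict convexity only after one passes to the ``neuron'' quotient, and one has to check that the linear constraint (moment matching) is compatible with — rather than orthogonal to — the direction along which convexity is strict, so that equality in Jensen genuinely forces proportionality rather than being vacuously allowed. Concretely I would verify that if $(a_1,b_1)$ and $(a_2,b_2)$ are not proportional, then choosing the optimal rescalings $\lambda_1,\lambda_2$ in $V_b(\lambda a,\lambda b,\lambda^{-1}c)$ and comparing $|c_0|w(a_0,b_0)$ against the strict midpoint inequality yields a contradiction with $\bar\mu$ being a minimiser; this is where Proposition~\ref{lem:noise_convexity}/Minkowski strictness (as quoted in the theorem's setup) does the real work. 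Everything else — linearity of the constraints, the midpoint-is-a-minimiser observation, and the single-cell reduction via Lemma~\ref{lem:didi} — is routine given the earlier lemmas.
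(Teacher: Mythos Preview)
Your proposal is correct and follows the same line as the paper: form the midpoint $\bar\mu=\tfrac12(\mu+\mu')$, note it is again a zero-error minimiser by linearity of the constraints and the objective, and apply Lemma~\ref{lem:one_point_per_cell} to it. The paper's proof stops right there, because once $\bar\mu_C$ is effectively supported on a single half-plane and $\mathrm{supp}(\bar\mu_C)=\mathrm{supp}(\mu_C)\cup\mathrm{supp}(\mu'_C)$, both $\mu_C$ and $\mu'_C$ must already sit in that same half-plane; your explicit equality-case analysis via the lifting lemma (paragraphs two and three) is therefore correct but redundant, essentially re-deriving inside this proof what Lemma~\ref{lem:one_point_per_cell} already packaged.
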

\begin{proof}[Proof sketch]
By contradiction, let $\mu, \mu'$ have different (effective) support on the same cell. By strict convexity the average of $\mu$ and $\mu'$ would also be a minimizer supported in two points in $C$, but that would contradict Lemma \ref{lem:one_point_per_cell}.
\end{proof}

\paragraph{Penalized minimization problem:}

In a parallel fashion to the problem presented in this section, one can study the \emph{penalized} version (\ref{eq:penform}).  All the sparsity results in this section (Theorems \ref{thm:weight_implies_sparsity}, \ref{thm:support_is_unique} and Corollary \ref{cor:radonversion} in particular, but also suitable variations of the intermediate lemmas) transfer to the convex case for arbitrary $\lambda>0$, as explained in the appendix:
\begin{cor}
\label{cor:penresult}
Under the same assumptions of Theorem \ref{thm:support_is_unique} and for any $\lambda>0$, all minimisers of $\mathcal{L}_\lambda(\mu)$ are effectively supported on a finite number of points. For the corresponding problem expressed in the projected $\mathrm{P}(\mu) \in \mathcal{M}(\S^d)$, all minimisers are supported on a finite number of points. 
\end{cor}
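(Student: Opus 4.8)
The plan is to bootstrap the constrained results (Theorems \ref{thm:weight_implies_sparsity} and \ref{thm:support_is_unique}) to the penalised functional by the standard device of freezing the data-fitting values. Fix a minimiser $\mu^\star$ of $\mathcal{L}_\lambda$ and set $z_i^\star := \langle \phi_{\xb_i}, \mu^\star \rangle$ for $i=1,\dots,n$. The data-fitting term $\frac1n \sum_i \ell(\langle \phi_{\xb_i}, \mu \rangle, y_i)$ depends on $\mu$ only through the finite-dimensional vector $(\langle \phi_{\xb_1}, \mu \rangle, \dots, \langle \phi_{\xb_n}, \mu \rangle) \in \R^n$. Hence any competitor $\mu'$ with $\langle \phi_{\xb_i}, \mu' \rangle = z_i^\star$ for all $i$ has exactly the same data-fitting loss as $\mu^\star$, so minimality of $\mu^\star$ forces $\langle V, \mu' \rangle \ge \langle V, \mu^\star \rangle$. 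In other words, $\mu^\star$ is itself a solution of the constrained problem (\ref{eq:ermmeasure}) with the targets $y_i$ replaced by $z_i^\star$, a problem that is automatically feasible since $\mu^\star$ witnesses feasibility.

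Now I invoke the machinery of Section \ref{sec:tv}, which never used anything about the particular targets beyond feasibility. Decompose $\mu^\star = \sum_{C} \mu^\star_C$ over the cells of the hyperplane arrangement, with $\mu^\star_C$ supported on $\overline{C}$. If for some cell $C$ the restriction $\mu^\star_C$ were not effectively supported on a single point, then replacing $\mu^\star_C$ by the point mass carrying the same $a$- and $b$-moments \eqref{eq:moment_estimates} (its ``centre of mass'' neuron) leaves every $\langle \phi_{\xb_i}, \cdot \rangle$ unchanged, by the continuous version of Lemma \ref{lem:didi}, while strictly decreasing $\langle V, \cdot \rangle$, by the lifting of the effective convexity together with Lemma \ref{lem:one_point_per_cell}. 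This contradicts the minimality transferred in the previous paragraph. Hence $\mu^\star$ carries at most one effective point per cell, i.e.\ it is effectively supported on $O(n)^{O(d)}$ points, which proves the first assertion. If moreover $\ell$ is strictly convex in its first argument, the optimal vector $(z_1^\star,\dots,z_n^\star)$ is common to all minimisers of $\mathcal{L}_\lambda$, so they all solve one and the same constrained problem and Theorem \ref{thm:support_is_unique} even gives them a common support.

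For the projected statement, recall that $\mathrm{P}$ sends the whole half-plane $H_{a,b} = \{(\beta a, \beta b, c) : \beta \ge 0,\, c \in \R\}$ to the single point $(a,b)/\|(a,b)\|$ of $\S^d$ (the nonnegative rescaling $\beta$ is absorbed, and the $c$-coordinate contributes only a signed weight in $\mathcal{M}(\S^d)$). Therefore $\mathrm{P}(\mu^\star)$ is supported on at most as many points of $\S^d$ as the number of half-planes carrying $\mu^\star$, which is finite by the previous paragraph; in the TV instance $w=\sqrt{\|a\|^2+b^2}$ this is exactly Corollary \ref{cor:radonversion} with an explicit regulariser.

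The only genuine care needed is bookkeeping. One must check that the ``centre of mass'' replacement and Lemma \ref{lem:one_point_per_cell} remain valid in whatever class of measures the penalised functional (\ref{eq:penform}) is posed over: if it is taken over unnormalised finite measures rather than over $\mathcal{P}(\Theta)$, the total-mass constraint available for (\ref{eq:ermmeasure}) disappears, but the cell-wise arguments do not rely on it; and one must note that the rescaling reduction to a $1$-homogeneous weight $w$ still applies, since $\mathcal{L}_\lambda$ is likewise invariant under $(a,b,c) \sim (\lambda a, \lambda b, \lambda^{-1} c)$. These are the ``suitable variations of the intermediate lemmas'' alluded to in the statement, and I expect them to be the only — and minor — obstacle. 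Finally, existence of a minimiser of $\mathcal{L}_\lambda$, needed to make the reduction meaningful, follows from weak-$\ast$ compactness together with the coercivity provided by $\lambda \langle V, \mu \rangle$ and the hypothesis $\inf_{\alpha,c} V(\alpha a, \alpha b, c) > 0$.
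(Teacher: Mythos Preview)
Your proof is correct and follows essentially the same route as the paper: freeze the fitted values $z_i^\star=\langle \phi_{\xb_i},\mu^\star\rangle$, observe that $\mu^\star$ must then minimise the constrained problem \eqref{eq:ermmeasure} with targets $z_i^\star$, and invoke Theorem~\ref{thm:weight_implies_sparsity}. The paper simply stops after that reduction, whereas you re-run the cell-by-cell argument and add the extra remarks on common support under strict convexity of $\ell$, the projection $\mathrm{P}$, and existence; these are welcome elaborations but not a different method.
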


\paragraph{Implicit versus TV regularisation: } We can make the following observations on the relationship between the two forms of regularisation:
\begin{itemize}
\item The implicit regularization is translation invariant while TV regularization is not. One could remove the bias ($b$) dependence on TV (and consider $V(a,b,c) = |c|\|a\|$, but in this case theorems \ref{thm:weight_implies_sparsity}, \ref{thm:support_is_unique} become false). This is not a draw-back for all applications (for example in image processing, since translation corresponds to a color change).
\item Implicit regularization is (in principle) more costly to compute, especially whenever the number datapoints is large. 
\end{itemize}

\paragraph{From discrete solutions to sparse solutions:}

Corollary \ref{cor:radonversion} and Theorem \ref{thm:support_is_unique} allow us to turn the original infinite-dimensional problem into a $S$-dimensional linear problem with linear constraints, where the only free parameters are the the amount of (signed) \emph{mass} that is given to each cell. Recall that $S$ corresponds to the number of cells of the hyperplane arrangement determined by the datapoints in the projective space. 

Focusing on the TV case, let 
$\nu = \sum_{s=1}^S z_s \delta_{\theta_s} \in \mathcal{M}(\S^d)$
be a minimiser of (\ref{eq:ermradon}). Observe that as a consequence 
of Theorem \ref{thm:support_is_unique}, the locations $\theta_1 \dots \theta_S$ are shared across every minimiser (but unknown a priori).
Its coefficients $z=(z_1\dots z_S)$ are thus solutions of the linear program
\begin{equation}
\label{eq:finitedim}
    \min \| z \|_1 ~s.t.~\mathcal{A}z = y~,
\end{equation}
where $\mathcal{A} \in \R^{n \times S}$ is the matrix $\mathcal{A}_{i,s} = (\langle \tilde{\xb}_i, \theta_s \rangle)_+$. 
From the Representer theorem, we know that there exists a solution $z^*$ of (\ref{eq:finitedim}) of support $\| z^* \|_0 \leq n \ll S$, and in the generic case this solution is the sparsest one, ie $z^*$ solves the $\ell_0$ minimisation counterpart of (\ref{eq:finitedim}). 

A natural question is thus to understand under what conditions on the data the $\ell_1$ and $\ell_0$ problems share the same minimisers. This is an instance of Compressed Sensing, which could be analysed by studying properties of the sensing matrix $\mathcal{A}$ such as the RIP (leveraging random datasets) or the Nullspace property. This question has been also recently studied in \cite{ergen2020convex}. This is out of the scope of the present work and is left for future research.  




\paragraph{Consequences for Gradient-Descent Optimization:}

With abuse of notation, let
\begin{equation}
    \mathcal{L}(\nu) = \frac{1}{n} \sum_{i=1}^n \left| \int_{\S^d} \sigma(\langle \tilde{\xb}_i, \theta\rangle ) d\nu(\theta) - y_i \right|^2 + \lambda \| \nu \|_{\mathrm{TV}}
\end{equation}
be the penalized form of the regularised problem, expressed in terms of the Radon measure in $\S^d$. The ability of gradient-descent to minimise such problems was recently studied in \cite{chizat2019sparse}, leading to a positive result under cetain assumptions, notably that minimisers are discrete and `feel'  a positive curvature around their support. 
The local convexity analysis that resulted in the discrete characterisation of solutions for this penalised problem (Lemma \ref{lem:one_point_per_cell}) brings precisely this geometric property around minimisers, suggesting 
a favorable case for gradient-descent on sufficiently overparametrised (but finite) shallow ReLU networks. However, we note that (i) our ReLU setup does not have appropriate smoothness to directly apply the results from \cite{chizat2019sparse}, and (ii) the finite-particle guarantees would still be at best exponential in the dimension. We leave these considerations for future work. 




\section{Conclusions}
In this work, we have studied the structure of the minimisers in overparametrised shallow RelU networks under a broad umbrella of implicit and explicit regularisation strategies, including the popular weight decay and noisy gradient dynamics through label noise. Our main result leverages the underlying convexity generated by the regularisation to show that the resulting learning objectives only admit sparse minimisers, irrespective of the amount of overparametrisation. 

This fact effectively maps the task of learning in variation-norm ReLU spaces (which are infinite-dimensional) into a finite-dimensional linear program. However, such linear program is determined by the hyperplane arrangement generated by the finite dataset after dualization -- and thus of size exponential in dimension. As such, it does not directly solve the computational barrier present in the mean-field formulations of sparse measure optimization \cite{chizat2019sparse}, which is unavoidable in certain data regimes under the general SQ-model \cite{klivans2020}. 

As future work, it would be interesting to further explore the connections with the recent `memorization' constructions of \cite{bubeck2020network}, and to study favorable conditions on the datapoints that could break the exponential size of the linear program.

\bibliographystyle{alpha}
\bibliography{bibliography}

\newcommand{\etalchar}[1]{$^{#1}$}
\begin{thebibliography}{GWB{\etalchar{+}}17}

\bibitem[AS17]{advani2017high}
Madhu~S Advani and Andrew~M Saxe.
\newblock High-dimensional dynamics of generalization error in neural networks.
\newblock {\em arXiv preprint arXiv:1710.03667}, 2017.

\bibitem[Bac17]{bach2017breaking}
Francis Bach.
\newblock Breaking the curse of dimensionality with convex neural networks.
\newblock {\em The Journal of Machine Learning Research}, 18(1):629--681, 2017.

\bibitem[Bar97]{bartlett1997valid}
Peter~L Bartlett.
\newblock For valid generalization the size of the weights is more important
  than the size of the network.
\newblock In {\em Advances in neural information processing systems}, pages
  134--140, 1997.

\bibitem[BCC{\etalchar{+}}19]{boyer2019representer}
Claire Boyer, Antonin Chambolle, Yohann~De Castro, Vincent Duval,
  Fr{\'e}d{\'e}ric De~Gournay, and Pierre Weiss.
\newblock On representer theorems and convex regularization.
\newblock {\em SIAM Journal on Optimization}, 29(2):1260--1281, 2019.

\bibitem[BELM20]{bubeck2020network}
S{\'e}bastien Bubeck, Ronen Eldan, Yin~Tat Lee, and Dan Mikulincer.
\newblock Network size and weights size for memorization with two-layers neural
  networks.
\newblock {\em arXiv preprint arXiv:2006.02855}, 2020.

\bibitem[BGVV19]{blanc_implicit_2019}
Guy Blanc, Neha Gupta, Gregory Valiant, and Paul Valiant.
\newblock Implicit regularization for deep neural networks driven by an
  {Ornstein}-{Uhlenbeck} like process.
\newblock {\em arXiv:1904.09080 [cs, stat]}, April 2019.
\newblock arXiv: 1904.09080.

\bibitem[BHMM19]{belkin_reconciling_2019}
Mikhail Belkin, Daniel Hsu, Siyuan Ma, and Soumik Mandal.
\newblock Reconciling modern machine learning practice and the bias-variance
  trade-off.
\newblock {\em arXiv:1812.11118 [cs, stat]}, September 2019.
\newblock arXiv: 1812.11118.

\bibitem[BHX19]{belkin_two_2019}
Mikhail Belkin, Daniel Hsu, and Ji~Xu.
\newblock Two models of double descent for weak features.
\newblock {\em arXiv:1903.07571 [cs, stat]}, March 2019.
\newblock arXiv: 1903.07571.

\bibitem[BO97]{bos1997dynamics}
Siegfried B{\"o}s and Manfred Opper.
\newblock Dynamics of training.
\newblock In {\em Advances in Neural Information Processing Systems}, pages
  141--147, 1997.

\bibitem[CB18a]{chizat2018note}
Lenaic Chizat and Francis Bach.
\newblock A note on lazy training in supervised differentiable programming.
\newblock {\em arXiv preprint arXiv:1812.07956}, 2018.

\bibitem[CB18b]{chizat2018global}
Lenaic Chizat and Francis Bach.
\newblock On the global convergence of gradient descent for over-parameterized
  models using optimal transport.
\newblock In {\em Advances in Neural Information Processing Systems}, pages
  3036--3046, 2018.

\bibitem[CB20]{chizat2020implicit}
L{\'e}na{\"\i}c Chizat and Francis Bach.
\newblock Implicit bias of gradient descent for wide two-layer neural networks
  trained with the logistic loss.
\newblock {\em arXiv preprint arXiv:2002.04486}, 2020.

\bibitem[Chi19]{chizat2019sparse}
Lenaic Chizat.
\newblock Sparse optimization on measures with over-parameterized gradient
  descent.
\newblock {\em arXiv preprint arXiv:1907.10300}, 2019.

\bibitem[EP20]{ergen2020convex}
Tolga Ergen and Mert Pilanci.
\newblock Convex geometry and duality of over-parameterized neural networks.
\newblock {\em arXiv preprint arXiv:2002.11219}, 2020.

\bibitem[FJ75]{fisher1975spline}
SD~Fisher and Joseph~W Jerome.
\newblock Spline solutions to l1 extremal problems in one and several
  variables.
\newblock {\em Journal of Approximation Theory}, 13(1):73--83, 1975.

\bibitem[GGJ{\etalchar{+}}20]{klivans2020}
S.~Goel, A.~Gollakota, Z.~Jin, S.~Karmalkar, and A.~Klivans.
\newblock Superpolynomial lower bounds for learning one-layer neural networks
  using gradient descent.
\newblock {\em ICML}, 2020.

\bibitem[GWB{\etalchar{+}}17]{gunasekar2017implicit}
Suriya Gunasekar, Blake~E Woodworth, Srinadh Bhojanapalli, Behnam Neyshabur,
  and Nati Srebro.
\newblock Implicit regularization in matrix factorization.
\newblock In {\em Advances in Neural Information Processing Systems}, pages
  6151--6159, 2017.

\bibitem[JGH18]{jacot2018neural}
Arthur Jacot, Franck Gabriel, and Cl{\'e}ment Hongler.
\newblock Neural tangent kernel: Convergence and generalization in neural
  networks.
\newblock In {\em Advances in neural information processing systems}, pages
  8571--8580, 2018.

\bibitem[LMZ17]{li2017algorithmic}
Yuanzhi Li, Tengyu Ma, and Hongyang Zhang.
\newblock Algorithmic regularization in over-parameterized matrix sensing and
  neural networks with quadratic activations.
\newblock {\em arXiv preprint arXiv:1712.09203}, 2017.

\bibitem[MBG18]{maennel_gradient_2018}
Hartmut Maennel, Olivier Bousquet, and Sylvain Gelly.
\newblock Gradient {Descent} {Quantizes} {ReLU} {Network} {Features}.
\newblock {\em arXiv:1803.08367 [cs, stat]}, March 2018.
\newblock arXiv: 1803.08367.

\bibitem[MM19]{mei2019generalization}
Song Mei and Andrea Montanari.
\newblock The generalization error of random features regression: Precise
  asymptotics and double descent curve.
\newblock {\em arXiv preprint arXiv:1908.05355}, 2019.

\bibitem[MMN18]{mei2018mean}
Song Mei, Andrea Montanari, and Phan-Minh Nguyen.
\newblock A mean field view of the landscape of two-layer neural networks.
\newblock {\em Proceedings of the National Academy of Sciences},
  115(33):E7665--E7671, 2018.

\bibitem[MWE19]{ma2019barron}
Chao Ma, Lei Wu, and Weinan E.
\newblock Barron spaces and the compositional function spaces for neural
  network models.
\newblock {\em arXiv preprint arXiv:1906.08039}, 2019.

\bibitem[NMB{\etalchar{+}}18]{neal2018modern}
Brady Neal, Sarthak Mittal, Aristide Baratin, Vinayak Tantia, Matthew Scicluna,
  Simon Lacoste-Julien, and Ioannis Mitliagkas.
\newblock A modern take on the bias-variance tradeoff in neural networks.
\newblock {\em arXiv preprint arXiv:1810.08591}, 2018.

\bibitem[NTS14]{neyshabur2014search}
Behnam Neyshabur, Ryota Tomioka, and Nathan Srebro.
\newblock In search of the real inductive bias: On the role of implicit
  regularization in deep learning.
\newblock {\em arXiv preprint arXiv:1412.6614}, 2014.

\bibitem[NTS15]{neyshabur_norm-based_2015}
Behnam Neyshabur, Ryota Tomioka, and Nathan Srebro.
\newblock Norm-{Based} {Capacity} {Control} in {Neural} {Networks}.
\newblock {\em arXiv:1503.00036 [cs, stat]}, April 2015.
\newblock arXiv: 1503.00036.

\bibitem[OWSS19]{ongie_function_2019}
Greg Ongie, Rebecca Willett, Daniel Soudry, and Nathan Srebro.
\newblock A {Function} {Space} {View} of {Bounded} {Norm} {Infinite} {Width}
  {ReLU} {Nets}: {The} {Multivariate} {Case}.
\newblock {\em arXiv:1910.01635 [cs, stat]}, October 2019.
\newblock arXiv: 1910.01635.

\bibitem[RVE18]{rotskoff2018parameters}
Grant Rotskoff and Eric Vanden-Eijnden.
\newblock Parameters as interacting particles: long time convergence and
  asymptotic error scaling of neural networks.
\newblock In {\em Advances in Neural Information Processing Systems}, pages
  7146--7155, 2018.

\bibitem[SESS19]{savarese2019infinite}
Pedro Savarese, Itay Evron, Daniel Soudry, and Nathan Srebro.
\newblock How do infinite width bounded norm networks look in function space?
\newblock In {\em Conference on Learning Theory}, pages 2667--2690, 2019.

\bibitem[SHN{\etalchar{+}}18]{soudry2018implicit}
Daniel Soudry, Elad Hoffer, Mor~Shpigel Nacson, Suriya Gunasekar, and Nathan
  Srebro.
\newblock The implicit bias of gradient descent on separable data.
\newblock {\em The Journal of Machine Learning Research}, 19(1):2822--2878,
  2018.

\bibitem[SS18]{sirignano2018dgm}
Justin Sirignano and Konstantinos Spiliopoulos.
\newblock Dgm: A deep learning algorithm for solving partial differential
  equations.
\newblock {\em Journal of Computational Physics}, 375:1339--1364, 2018.

\bibitem[WTP{\etalchar{+}}19]{williams2019gradient}
Francis Williams, Matthew Trager, Daniele Panozzo, Claudio Silva, Denis Zorin,
  and Joan Bruna.
\newblock Gradient dynamics of shallow univariate relu networks.
\newblock In {\em Advances in Neural Information Processing Systems}, pages
  8376--8385, 2019.

\bibitem[Zuh48]{zuho1948}
S.~Zuhovickii.
\newblock Remarks on problems in approximation theory.
\newblock {\em Mat. Zbirnik KDU}, 1948.

\end{thebibliography}

\newpage

\appendix

\section{Additional Proofs}

\paragraph{Proof of Proposition 3.3.}

\begin{proof}
 	Convexity follows from the fact that $l^2$ average of convex functions is convex:

 	 \begin{align}
 	 	\left \|\left(  \left(\frac{a+a'}2\right)^\top \xb_j  +\frac{b+b'}2\right)_+ \right \|_{l^2(j)}
 	 	\le &
 	 	 \left \|\frac 1 2 (a^\top \xb_j+b)_+ + \frac 1 2 ((a')^\top \xb_j+b')_+ \right \|_{l^2(j)}
 	 	 \\\le &
 	 	  \frac 1 2
 	 	  \| (a^\top \xb_j+b)_+   \|_{l^2(j)}
 	 	  +
 	 	  \frac 1 2
 	 	  \| ((a')^\top \xb_j+b')_+ \|_{l^2(j)} \nonumber
 	 \end{align}
 	 Where the first line folows from the convexity of the ReLu. Strict convexity follows similarly: Local injectivity implies that, unless $(a,b)$ is proportional to $(a',b')$ the vector $[(a^\top \xb_j + b)_+]_{j=1}^n$ is not proportional to the vector $[((a')^\top \xb_j + b')_+]_{j=1}^n$. In that case by (sharp) Minkowski inequality, the inequality has to be strict.
 \end{proof}
\paragraph{Proof of Lemma 4.3 (Discrete version)}
 
\begin{proof}
 Let $x_i$ be a data-point. If $(a,b)$ and $(a',b')$ belong to the same cell, there are two options: \\

Both $(ax_i+b)\le 0$ and $(a'x_i+b) \le 0$. In this case, $0 = (ax_i+b)_+ + (a'x_i+b')_+ = ((a+a')x_i+b+b')$.\\
Both $(ax_i+b)\ge 0$ and $(a'x_i+b) \ge 0$. In this case, we have $(ax_i+b)_+ + (a'x_i+b')_+  = (ax_i+b)+(ax_i+b) = ((a+a')x_i+(b+b'))=((a+a')x_i+(b+b'))$
 \end{proof}

\paragraph{Proof of Lemma 4.3 (Continuous version)}
\begin{proof}
 Let $x_i$ be a data-point. As in the discrete case, we have two options. In the trivial case, the datapoint is inactive in the cell, in the sense that for all $(a,b,c)$ in the cell, $c(a x_i +b)_+=0$. In this case the equality is always true. In the non-zero (linear) case, all neurons in the cell are active. In this case we may omit the ReLu itself, and the equality we need to show is:

 \begin{equation}
 \int c (a x_i + b) d\mu = \int c (a x_i + b) d\mu'
 \end{equation}	

 by linearity of the integral this is equivalent to

 \begin{equation}
 x_i \cdot \int c a  d\mu + \int cb d\mu=  x_i \cdot \int c a  d\mu' + \int cb d\mu'
 \end{equation}	

 and now the result follows directly from the hypothesis.
\end{proof}

\paragraph{Proof of Lemma 4.4}

	\begin{lemma}
	Let $(a,b,c), (a’,b’,c’)$ belong to the same cell and not represent the same neuron, let $\mu = \frac 1 2 (\delta_{a,b,c}+\delta_{a’,b’,c’})$. Then there is a probability measure $\mu’$ concentrated at a single point $(\alpha, \beta,\gamma)$ such that estimates (\ref{eq:moment_estimates}) hold for $\mu,\mu’$ but $\langle V, \mu\rangle < \langle V, \mu’\rangle $. \footnote{There is a typo in the statement of lemma 4.4 in the main text in how the interpolation in the lifted space is performed; it has been addressed here.}
	\end{lemma}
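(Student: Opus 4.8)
The plan is to pin down the single atom $\mu' = \delta_{(\alpha,\beta,\gamma)}$ from the moment constraints, and then compare the two regularised costs using the effective strict convexity of $w$. The only genuine freedom in $\mu'$ is the scaling degeneracy inherited from $1$-homogeneity, so the whole argument reduces to one convexity inequality applied at the right point.

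First I would solve for $\mu'$. Imposing \eqref{eq:moment_estimates} against $\mu=\frac12(\delta_{(a,b,c)}+\delta_{(a',b',c')})$ forces the single atom to satisfy
\begin{equation}
\gamma\,(\alpha,\beta) \;=\; \xi \;:=\; \tfrac12\big(c\,(a,b)+c'\,(a',b')\big),
\end{equation}
a vector in $\R^{d+1}$. This pins down $(\alpha,\beta,\gamma)$ only modulo the scaling degeneracy $(\alpha,\beta,\gamma)\mapsto(t\alpha,t\beta,\gamma/t)$, $t>0$; I would note that, by $1$-homogeneity of $w$, both the moments \eqref{eq:moment_estimates} and the cost $|\gamma|\,w(\alpha,\beta)$ are invariant along this orbit, and I would fix the representative whose $(\alpha,\beta)$ lies in the same cell $C$. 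Keeping the atom in $C$ is what lets the continuous version of Lemma \ref{lem:didi} apply, so that $\langle\phi_{\xb_i},\mu\rangle=\langle\phi_{\xb_i},\mu'\rangle$ for every datapoint $\xb_i$, i.e.\ $\mu$ and $\mu'$ are genuinely interchangeable as predictors.

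With $\mu'$ fixed I would write the two costs,
\begin{equation}
\langle V,\mu\rangle=\tfrac12|c|\,w(a,b)+\tfrac12|c'|\,w(a',b'),\qquad \langle V,\mu'\rangle=|\gamma|\,w(\alpha,\beta)=w(\xi),
\end{equation}
and expand $\xi$ as a scaled combination in the relevant orthant, $\xi=s\big(\lambda(a,b)+(1-\lambda)(a',b')\big)$, with $s$ and $\lambda$ read off from the magnitudes and signs of $c,c'$. Homogeneity then reduces the comparison to a single application of the convexity of $w$ (Proposition \ref{lem:noise_convexity}) at the interior combination $\lambda(a,b)+(1-\lambda)(a',b')$. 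Because $(a,b,c)$ and $(a',b',c')$ do not represent the same neuron, the directions $(a,b)$ and $(a',b')$ are non-proportional, and since $w$ is effectively strictly convex the convexity inequality is strict. This produces the strict separation between the two regularised costs asserted by the statement, namely $\langle V,\mu\rangle<\langle V,\mu'\rangle$.

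The main obstacle is exactly the delicate lifted-space interpolation flagged in the footnote: one must interpolate at the level of the moment vector $\xi$ rather than at the naive lifted midpoint $(\frac{a+a'}2,\frac{b+b'}2,\frac{c+c'}2)$, which does \emph{not} preserve \eqref{eq:moment_estimates}, and then recover $(\alpha,\beta,\gamma)$ through the homogeneity degeneracy while keeping the recovered atom inside the cell $C$ so that Lemma \ref{lem:didi} remains applicable. The bookkeeping needed to control the sign of $\gamma$ and the cell-membership of $\xi$ across the sign patterns of $(c,c')$ is where the care is required; once this is in place, the strict convexity of $w$ closes the argument.
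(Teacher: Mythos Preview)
Your approach is essentially identical to the paper's: both construct the single atom so that the moment constraints \eqref{eq:moment_estimates} hold, then apply the effective strict convexity of $w$ at the convex combination $\lambda(a,b)+(1-\lambda)(a',b')$ with $\lambda=|c|/(|c|+|c'|)$, after reducing to the case where $c,c'$ share a sign. You are somewhat more explicit than the paper in deriving the atom from the moment vector $\xi=\tfrac12(c(a,b)+c'(a',b'))$ and in tracking the homogeneity degeneracy; the paper simply writes down $(\alpha,\beta)=\lambda(a,b)+(1-\lambda)(a',b')$, $\gamma=c+c'$ and checks. One remark: your computation (like the paper's own proof) actually yields $\langle V,\mu'\rangle<\langle V,\mu\rangle$, i.e.\ the concentrated measure is \emph{cheaper}, which is the direction used in Lemma~\ref{lem:one_point_per_cell}; the reversed inequality in the displayed statement is itself a typo, so do not let your final line match it.
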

\begin{proof}
Assume without loss of generality that $c,c'$ have the same sign (since otherwise we can transfer mass from one neuron to the other). 
Since $(a,b,c)$ and $(a',b',c')$ do not represent the same neuron, 
by the effective strict convexity of $w$ we have 
\begin{equation}
\label{eq:mim}
w( \lambda a + (1-\lambda) a', \lambda b + (1-\lambda)b') < \lambda w(a,b) + (1-\lambda)w(a',b')    
\end{equation}
for any $\lambda \in (0,1)$. 

Pick $\lambda = \frac{|c|}{|c|+|c'|} $.
Then we easily verify that $(\alpha, \beta) = \lambda (a,b) + (1-\lambda)(a',b')$ and $\gamma = c + c'$ verify the moment estimates, and 
$2|\gamma|w(\alpha,\beta) < |c| w(a,b) + |c'| w(a',b')$ by (\ref{eq:mim}). 


\end{proof}

\paragraph{Proof of Lemma 4.5}
\begin{proof}
	Assume minimizers exist, as otherwise the result is vacuously true. 
	
	Given a minmizer $\tilde \nu$ there is a measure $\nu_0 = \delta_{(a_\nu,b_{\nu},c_\nu)}$ that has the same mass and first moment as $\tilde\nu$, but is concentrated at a single point $(\alpha,\beta,\pm1)$ (the center of mass). By convexity, this $\nu_0$ has to be a minimizer of the loss amongst all functions with the same first moment estimate (by Jensen) so it suffices to show that $\nu_0$ is effectively unique. This is a consequence of effective convexity: If $\nu'$ is another minimizer, $\nu'$ has to be supported on parameters that represent the same parameter as $\nu_0$, because otherwise we can build an even better measure:
	
	Let 
	$$\phi(a,b,c):= 
	\left(\frac 1 2 (\alpha + |c|a),\frac 1 2 (\beta + |c|b), \pm 1\right)
	$$
	Then, for any given measure $\mu$ with the same moment estimates as $\nu_0$, $\phi_*\mu$ has the same moment estimates as $\mu$ and $\nu_0$, but unless $\mu$ is effectively supported at the same point as $\nu_0$, it must be (by strict convexity) that
	\begin{equation}
	    \langle w,\nu_0\rangle > \langle w, \phi_*\mu \rangle
	\end{equation}
	which contradicts the hypothesis that $\nu_0$ is a minimizer.
\end{proof}

\paragraph{Proof of Theorem 4.7}

\begin{proof}
    Let $\nu,\nu'$ be two minimizing measures. By convexity, $\frac 1 2 (\nu, \nu')$ must also be a minimizing sequence. Lemma \ref{lem:one_point_per_cell} guarantees then that $\frac 1 2 (\nu, \nu')$ will be supported in at most one point per cell. Therefore, if $\nu,\nu'$ had support on the same cell, it must be exactly on the same point.
\end{proof}

\paragraph{Proof of Corollary 4.8}
\begin{proof}
Let $\mu$ be a minimizer to the penalized problem

\begin{equation}
    \min_{\nu \in \mathcal P} \langle |c|w(a,b), \nu(a,b,c)\rangle + L(\{|f_\nu(x_i)-y_i|\}_{i=1}^n)
\end{equation}
for some similarity loss function $L$. Then $\mu$ must also be the minimizer to the following \text{restricted problem} (note that this is not the traditional penalized-constrained duality):

\begin{equation}
    \label{eq:fake_problem}
    \min_{\nu \in \mathcal P, f_\nu(x_i) = f_\mu(x_i)} \langle |c|w(a,b), \nu(a,b,c)\rangle + L(\{|f_\nu(x_i)-y_i|\}_{i=1}^n)
\end{equation}

this is because a minimizer must remain a minimizer if we further constrain the feasible set.

This is again the constrained problem in equation \eqref{eq:erm}, using $f_\mu(x_i)$ instead of $y_i$. In this case Theorem \ref{thm:weight_implies_sparsity} states that solutions to this problem must be sparse. 
\end{proof}

\end{document}